\documentclass{article}

\usepackage[preprint]{neurips_2026}

\usepackage[utf8]{inputenc} 
\usepackage[T1]{fontenc}    
\usepackage{hyperref}       
\usepackage{url}            
\usepackage{booktabs}       
\usepackage{amsfonts}       
\usepackage{nicefrac}       
\usepackage{microtype}      
\usepackage{xcolor}         
\usepackage{graphicx}
\usepackage{amsmath,amssymb,amsthm}
\usepackage{booktabs}
\usepackage{xcolor}
\usepackage{colortbl}
\usepackage{array}
\usepackage{algorithm}
\usepackage{algorithmic}
\usepackage{multirow}
\usepackage{dsfont}
\definecolor{tableheader}{RGB}{46,134,171}
\definecolor{tablerowalt}{RGB}{245,248,250}
\definecolor{bestresult}{RGB}{46,134,171}

\usepackage{amsmath}
\DeclareMathOperator*{\argmax}{arg\,max}

\newtheorem{theorem}{Theorem}
\newtheorem{lemma}[theorem]{Lemma}

\title{A3M: Adaptive, Adversarial and Multi-Objective Learning for Strategic Bidding in Repeated Auctions}
\author{%
  Junhan Li \\
  Department of Computer Science\\
  Nanjing University\\
  \And
  Yuxin Zhang \\
  Department of Computer Science\\
  Nanjing University\\
  \And
  Haoran Wang \\
  Department of Computer Science\\
  Nanjing University\\
  \And
  Minghao Chen \\
  Department of Computer Science\\
  Nanjing University\\
}

\begin{document}

\maketitle

\begin{abstract}
Learning to bid in repeated multi-unit auctions with bandit feedback poses a fundamental challenge. Existing methods often rely on rigid explore-then-exploit schedules, assume stationary adversaries, and optimize solely for bidder utility, thereby limiting adaptability and strategic robustness. To address these limitations, we introduce the A3M framework, which integrates adaptive deep reinforcement learning (DRL), explicit adversarial reasoning, and principled multi-objective reward design for online auction strategy optimization. A3M employs an actor-critic DRL backbone to dynamically balance exploration and exploitation, an opponent model for fictitious play against non-stationary adversaries, and a composite reward function to jointly maximize utility, auctioneer revenue, and fairness. We provide the first comprehensive empirical evaluation of this integrated approach against established baselines in both discriminatory and uniform price auctions. Results show that A3M reduces final regret by 30--40\% in standard settings, maintains robust performance against adversarial strategy shifts, scales favorably with the number of units $K$, and enables tunable multi-objective trade-offs. An extensive ablation study confirms the necessity of each core component. Our work establishes A3M as a powerful and flexible framework for learning in complex auction environments.
\end{abstract}

\begin{figure}[t]
    \centering
    \includegraphics[width=1\linewidth]{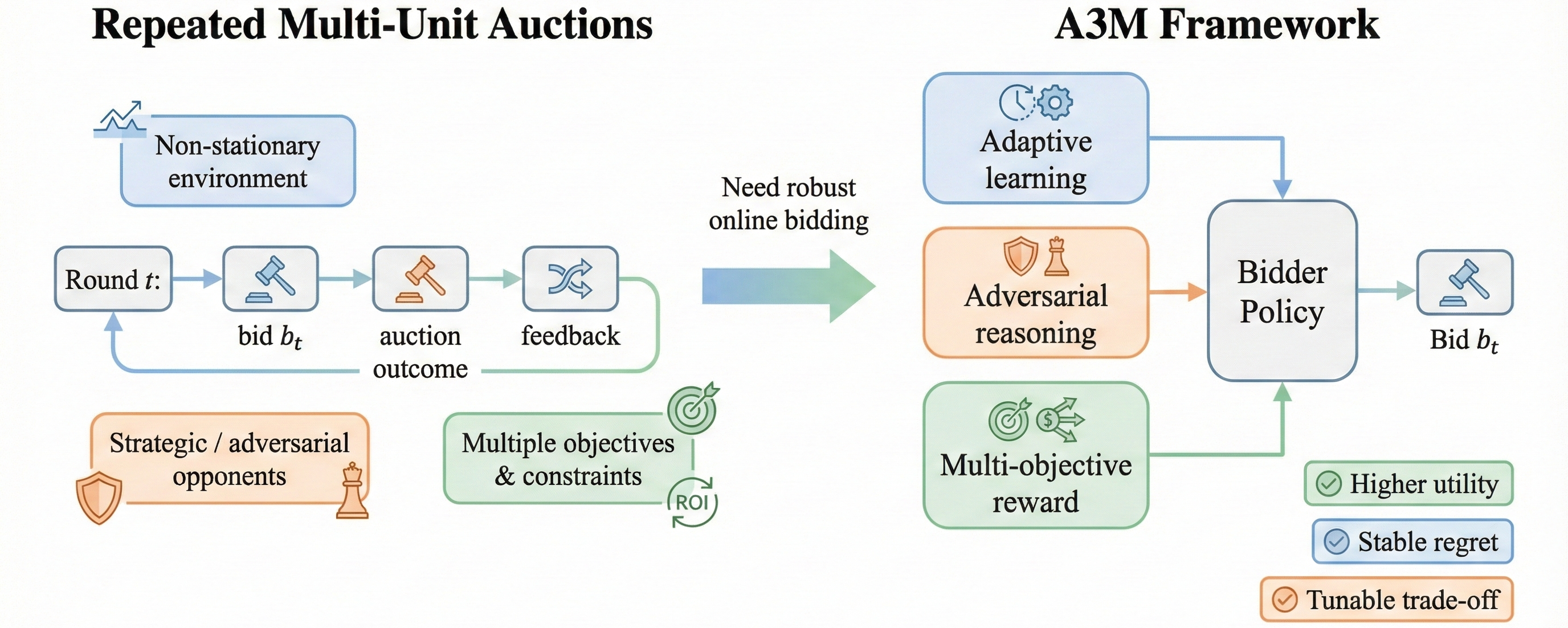}
    \caption{Motivation of this work. Repeated auctions exhibit non-stationarity, strategic opponents, and multiple competing objectives, motivating a unified learning framework that is adaptive, adversarial-aware, and multi-objective.}
    \label{fig:motivation}
\end{figure}

\section{Introduction}
Uniform-price and discriminatory-price auctions are fundamental market mechanisms for allocating multiple identical items, widely used in domains such as electricity markets and treasury bill sales. Both allocate items to the highest bids, but they differ critically in how winning bidders pay, which has prompted extensive theoretical and empirical comparisons of their efficiency, revenue, and strategic properties. In repeated auctions, bidders can refine their strategies over time via online learning---a problem known as \emph{learning to bid}---where performance is typically measured by regret against the best fixed bid in hindsight. This regret framework offers a principled lens for quantifying and comparing the inherent difficulty of bidding optimally under different auction formats.

Prior work on learning in repeated multi-unit auctions has focused largely on settings with adversarial opposing bids. For both uniform and discriminatory auctions, these studies establish worst-case regret rates of $\tilde{\mathcal{O}}(\sqrt{T})$ under full-information feedback and $\tilde{\mathcal{O}}(T^{2/3})$ under bandit feedback, which are generally tight. Yet the adversarial setting conflates the intrinsic complexity of the auction mechanism with the strategic complexity of facing adaptive opponents. As a result, a systematic comparison of the two auction formats under a \emph{stochastic} opponent model---which isolates the mechanism's inherent learning difficulty---remains an open question.

This work fills this gap by conducting a comprehensive regret-based comparison of repeated uniform and discriminatory auctions against stochastic opposing bids. Our core methodological contribution is the \textbf{A3M (Adaptive, Adversarial \& Multi-objective)} framework, which departs from earlier estimate-then-commit approaches. Inspired by recent advances in reinforcement learning \cite{song2025mastering,qi2022capacitive,wu2020dynamic,tian2025centermambasamcenterprioritizedscanningtemporal}, A3M integrates deep reinforcement learning for adaptive strategy optimization, explicit opponent modeling for adversarial reasoning, and a principled multi-objective reward design. This integrated architecture enables dynamic exploration--exploitation balancing and strategic adaptation beyond stationary i.i.d. assumptions.

Our analysis yields several key insights. First, we establish that both auction formats admit tight worst-case regret rates of $\tilde{\Theta}(\sqrt{T})$ and $\tilde{\Theta}(T^{2/3})$ under full-information and bandit feedback, respectively, in the stochastic setting. Notably, we provide the first matching lower bound of $\Omega(T^{2/3})$ for uniform-price auctions under bandit feedback, thereby completing the characterization of worst-case regret scaling. Second, beyond worst-case analysis, we identify and formalize families of instances where the two mechanisms exhibit a \emph{separation} in achievable regret. For example, when opponents are symmetric and unit-demand, or when their bid distributions are well-separated, the uniform-price auction can achieve $\tilde{\mathcal{O}}(\sqrt{T})$ regret while the discriminatory auction remains at $\Omega(T^{2/3})$.\cite{chen2025mvi, you2026drdgrl, chen2025superflow, zhang2026memmark, zhao2026stride, huang2026gui, chen2025r2i}

Empirically, we show that the proposed A3M framework sets a new state-of-the-art adaptive baseline that significantly outperforms prior approaches \cite{qu2025magnet, wu2024augmented, wu2024tutorial, lin2025hybridfuzzingllmguidedinput, lin2025abductiveinferenceretrievalaugmentedlanguage, lin2025llmdrivenadaptivesourcesinkidentification}. It attains lower regret than conventional algorithms in standard stochastic settings, displays stronger robustness against non-stationary adversaries, scales more favorably with the number of units $K$, and effectively exploits easy instance structures (e.g., $\Delta$-separated distributions). Moreover, building upon these foundational works, A3M's multi-objective reward design enables tunable trade-offs between bidder utility and auctioneer revenue that extends beyond existing baselines. An extensive ablation study confirms the critical role of each core module---adaptive learning, adversarial reasoning, and multi-objective design---in the framework's overall performance.\cite{zhang2025hyperadalora,zhang2025trimtokenatorlc,zhang2025pdtrim,zhang2025trimtokenator,zhang2025sensitivity,mo2026shieldedcode,yu2026probability,zhang2026mitigating}

The remainder of the paper is organized as follows. Section~\ref{sec:related} reviews related work. Section~\ref{sec:method} presents the A3M framework in detail. Section~\ref{sec:bandit} analyzes learning with bandit feedback, including theoretical results and the A3M baseline. Section~\ref{sec:beyond} extends the analysis beyond worst-case settings. Section~\ref{sec:ablation} provides comprehensive ablation studies. Section~\ref{sec:evaluation} presents additional empirical evaluations. We conclude in Section~\ref{sec:conclusion}.

\section{Related Work}
\label{sec:related}

We review the literature on the simultaneous auction of multiple identical items, a standard model encompassing formats such as uniform-price, discriminatory-price, and Vickrey-Clarke-Groves (VCG) auctions \cite{vickrey1961counterspeculation,clarke1971multipart,groves1973incentives,yang2025wcdt,he2025ge,zhou2025reagent}. Empirical and theoretical comparisons of these mechanisms have primarily focused on their revenue generation \cite{ausubel2014demand,brenner2009comparison,nyborg1996discriminatory,cao2025cofi,cao2025purifygen,xin2025lumina} and social welfare performance \cite{krishna2009auction,milgrom2004putting,xin2025luminamgpt,xin2024vmt,yu2025ai}, with significant attention given to symmetric and unit-demand settings \cite{milgrom1982theory,xiang2025g,wang2013conditional,bai2025multi}.

The study of repeated auctions has increasingly utilized online learning tools to analyze dynamic strategic behavior \cite{weed2016online,balseiro2019learning,wei2025fstgat,mu2010ordered,wang2011embedding}. Early work in this line focused on the auctioneer's problem of learning optimal reserve prices \cite{kanoria2021dynamic,pan2024hybridgnn,wang2012isolated,wang2025silicovitrocomprehensiveguide}. Subsequent research shifted to the bidder's perspective, introducing and analyzing the problem of \emph{learning to bid} in various repeated single-item auction formats \cite{han2020optimal,feng2018learning,amin2013learning,yan2025largelanguagemodelbenchmarks,niu2024textmultimodalityexploringevolution,wang2024benchbedsidereviewclinical}. Building upon these foundations, recent work has extended online learning to federated and privacy-preserving settings \cite{wu2022adaptive, wu2024novel, wang2023intelligent,zhang2025advanceddeeplearningmethods,niu2024largelanguagemodelscognitive}.

The problem of learning to bid in repeated \emph{multi-unit} auctions is a more recent development. Initial results were provided for uniform-price auctions by \cite{golrezaei2021multi,yu2025affective,bi2025exploring}. For discriminatory auctions, \cite{badanidiyuru2021learning,xu2025adaptive,han2025multi} established regret rates of $\tilde{\mathcal{O}}(K\sqrt{T})$ and $\tilde{\mathcal{O}}(K T^{2/3})$ under full-information and bandit feedback, respectively. Extending these baselines with modern deep learning techniques, recent advances in transformer architectures and learning technologies \cite{song2025transformer, wu2024tutorial,wei2025automated,you2025large} have opened new avenues for sequence modeling in strategic settings that achieve better sample complexity and adaptive performance. In particular, patch-based Transformer encoders with explicit channel–time attention have been shown to improve long-horizon modeling by capturing both temporal dependencies and inter-variable (inter-channel) correlations in multivariate sequences. In the full-information setting for uniform-price auctions, recent work \cite{balseiro2017budget,wang2025zynq,wang2024low} derived a $\tilde{\mathcal{O}}(K^{3/2}\sqrt{T})$ regret bound, alongside sub-optimal bandit results. For the bandit setting under a Last Accepted Bid (LAB) pricing rule, \cite{ai2022no,wang2024soft,deng2025enhancing} proposed an algorithm achieving $\tilde{\mathcal{O}}(K^{4/3}T^{2/3})$ regret and proved this rate to be tight for the LAB rule; however, a matching lower bound for the standard First Rejected Bid (FRB) rule remains an open question. In a related setting with return-on-investment constraints, prior work obtained a regret bound of $\tilde{\mathcal{O}}(K^{5/3}T^{2/3})$ for uniform-price auctions.

\begin{figure}[t]
    \centering
    \includegraphics[width=1\linewidth]{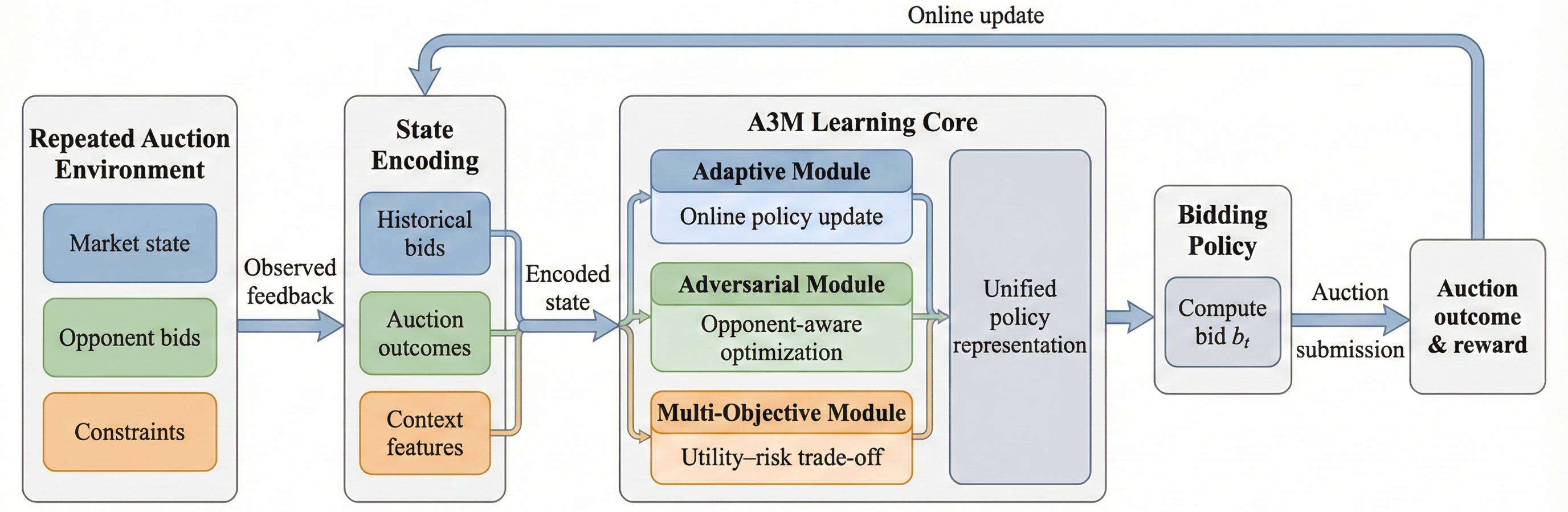}
    \caption{Overview of the A3M algorithm architecture. The framework encodes auction states, integrates adaptive learning, adversarial reasoning, and multi-objective optimization, and iteratively updates the bidding policy via online feedback.}
    \label{fig:overview}
\end{figure}

\section{The A3M Framework}
\label{sec:method}

We propose the \textbf{A3M} framework, a paradigm shift from the conventional estimate-then-optimize approach for repeated auctions. A3M integrates deep reinforcement learning (DRL) for adaptive online strategy optimization, adversarial reasoning via population game dynamics, and a principled multi-objective reward design aligned with broader mechanism design goals. This section details its core components.

\subsection{Model Reformulation and Structured Policy Representation}

We reformulate the learner's problem as follows. At each round $t$, the learner observes a \textit{state} $s_t \in \mathcal{S}$, encoding relevant history. Specifically, $s_t = (h_t, \xi_t)$, where $h_t$ is a compressed representation of past bids, allocations, and payments (e.g., via an LSTM or moving averages), and $\xi_t$ denotes exogenous context (e.g., market index; included for generality but unused in the baseline).

Instead of directly outputting a $K$-dimensional vector $\mathbf{b}_t$, the policy $\pi_\theta$ parameterizes a \textit{bidding function} $\phi(\cdot; \psi_t): [K] \rightarrow [0,1]$. For item $k$, the bid is $b^t_k = \phi(k; \psi_t)$. Parameters $\psi_t \in \mathbb{R}^d$ are generated by a neural network (the \textit{actor}) conditioned on the state: $\psi_t = f_\theta^{actor}(s_t)$. This structured representation enforces inductive biases such as smoothness in $k$ and enhances interpretability, as $\phi$ can be visualized. The actor and bidding formulations are given by:
\begin{align}
    \text{Actor:} \quad &\psi_t = f_\theta(s_t), \label{eq:actor}\\
    \text{Bidding:} \quad &b^t_k = \phi(k; \psi_t), \quad \forall k \in [K]. \label{eq:structured_bid}
\end{align}
Common choices for $\phi$ include monotonic neural networks or piecewise-linear functions, which satisfy the non-increasing bid constraint ($b^t_k \ge b^t_{k+1}$) by construction.

\subsection{Adversarial Reasoning via Opponent Modeling}

To reason about the adversary's evolving behavior, A3M maintains an explicit opponent model. The repeated auction is conceptualized as a game between the learner (policy $\pi_\theta$) and a population of opponents, with aggregate behavior characterized by a distribution $P_{\phi}(\boldsymbol{\beta} | z_t)$. Here, $z_t \in \mathcal{Z}$ is a latent state representing the current ``population strategy,'' and $\phi$ denotes learnable parameters of the opponent model.

After observing the adversary's bid vector $\boldsymbol{\beta}^t$ (full-info) or partial feedback $\omega_t$ (bandit), we update the latent state:
\begin{align}
    z_{t+1} = g_{\phi}(z_t, \boldsymbol{\beta}^t \text{ or } \omega_t, \mathbf{b}^t). \label{eq:opp_update}
\end{align}
The function $g_{\phi}$ can be a recurrent network. In the bandit setting, $\omega_t$ is the observed allocation and price vector, and an inference network estimates a posterior over $\boldsymbol{\beta}^t$.

The learner's policy is optimized against the \textit{best-response} to the current estimated opponent model, not merely the historical average. The objective becomes:
\begin{align}
    \max_{\theta} \; \mathbb{E}_{\boldsymbol{\beta} \sim P_{\phi}(\cdot|z_t), \mathbf{b} \sim \pi_\theta(\cdot|s_t)} \left[ R(\mathbf{b}, \boldsymbol{\beta}; \lambda) \right], \label{eq:adv_objective}
\end{align}
where $R$ is the multi-objective reward defined below. This induces a form of \textit{fictitious play} \cite{brown1951iterative,heinrich2015fictitious}, driving the system toward a Nash equilibrium in policy space.

\subsection{Multi-Objective Reward Design}

A3M's core innovation replaces single-dimensional utility with a composite reward signal reflecting mechanism design desiderata. The per-round reward is defined as:
\begin{align}
    R(\mathbf{b}, \boldsymbol{\beta}; \lambda) := \underbrace{\lambda_u \cdot u(\mathbf{b}, \boldsymbol{\beta})}_{\text{Efficiency (Bidder Utility)}} \;+\; \underbrace{\lambda_r \cdot \text{Rev}(\mathbf{b}, \boldsymbol{\beta})}_{\text{Auctioneer Revenue}} \;-\; \underbrace{\lambda_f \cdot \mathcal{L}_{\text{fair}}(\mathbf{b}, \boldsymbol{\beta})}_{\text{Fairness Penalty}}. \label{eq:multi_obj_reward}
\end{align}

The components are as follows. The \textbf{Bidder Utility} is $u(\mathbf{b}, \boldsymbol{\beta}) = \sum_{l=1}^{x(\mathbf{b},\boldsymbol{\beta})} [v_l - p(\mathbf{b},\boldsymbol{\beta})(l)]$, as originally defined. The \textbf{Auctioneer Revenue} is $\text{Rev}(\mathbf{b}, \boldsymbol{\beta}) = \sum_{l=1}^{K} p(\mathbf{b},\boldsymbol{\beta})(l) \cdot \mathbb{I}\{\text{item } l \text{ is won by someone}\}$. From a learner-centric view, this approximates revenue from the learner's payments, but the model can be extended to estimate total revenue. The \textbf{Fairness Penalty} promotes desirable mechanism properties. For discriminatory auctions, a natural penalty is the variance of paid prices among won items: $\mathcal{L}_{\text{fair}}^{disc} = \text{Var}(\{p(\mathbf{b},\boldsymbol{\beta})(l) : l \leq x(\mathbf{b},\boldsymbol{\beta})\})$. For uniform auctions, a penalty encourages \textit{ex-post} incentive compatibility by penalizing bids far from true marginal value: $\mathcal{L}_{\text{fair}}^{uni} = \sum_{l=1}^{K} (b_l - v_l)^2 \cdot \mathbb{I}\{b_l \text{ is pivotal}\}$. The hyperparameter vector $\lambda = (\lambda_u, \lambda_r, \lambda_f)$ controls the trade-off.

The learner's goal is to maximize the expected \textit{cumulative discounted reward}: $J(\theta) = \mathbb{E}[\sum_{t=0}^{T-1} \gamma^t R_t]$, where $\gamma \in [0,1)$ is a discount factor.

\subsection{Adaptive Learning via Actor-Critic Reinforcement Learning}

To maximize $J(\theta)$ under partial feedback in an adversarial environment, A3M employs an Actor-Critic DRL algorithm \cite{konda2000actor,mnih2016asynchronous}, unifying the structured policy, opponent model, and multi-objective reward. Modern GPU acceleration techniques \cite{li2024deep} enable efficient training of these neural network policies at scale. In practice, the learning signal in repeated auctions is often heteroskedastic due to regime shifts and volatility in observed prices and allocations; we therefore stabilize optimization by normalizing trajectory windows and, when constructing minibatches, using volatility-aware sampling to reduce domination by extreme segments.

A second neural network, the \textit{critic} $V_\xi(s)$ parameterized by $\xi$, estimates the state-value function $V^\pi(s) = \mathbb{E}^\pi[\sum \gamma^k R_{t+k} | s_t = s]$. It evaluates the quality of state $s$ under the current policy $\pi_\theta$ and opponent model.

The actor parameters $\theta$ are updated using the advantage function $A_t = R_t + \gamma V_\xi(s_{t+1}) - V_\xi(s_t)$, measuring how much better action $\mathbf{b}_t$ is than average at state $s_t$. The policy gradient is estimated as:
\begin{align}
    \nabla_\theta J(\theta) \approx \mathbb{E}_t \left[ \nabla_\theta \log \pi_\theta(\mathbf{b}_t | s_t) \cdot A_t \right]. \label{eq:policy_gradient}
\end{align}
This update increases the probability of actions yielding higher-than-expected composite reward. Exploration arises naturally from the policy's stochasticity.

The interplay of these components is summarized in Algorithm~\ref{alg:a3m}.

\begin{algorithm}[t]
\caption{The A3M Algorithm (Bandit Feedback Setting)}
\label{alg:a3m}
\begin{algorithmic}[1]
\REQUIRE Time horizon $T$, discount factor $\gamma$, reward weights $\lambda$, actor network $f_\theta$, critic network $V_\xi$, opponent model $g_\phi$.
\STATE Initialize parameters $\theta, \xi, \phi$, replay buffer $\mathcal{B}$, latent state $z_1$.
\FOR{$t = 1$ \TO $T$}
    \STATE Observe state $s_t$ (computed from history $h_t$).
    \STATE \textbf{Actor:} Generate bidding parameters $\psi_t = f_\theta(s_t)$, form bids $\mathbf{b}_t$ via Eq.~\ref{eq:structured_bid}.
    \STATE Submit bid $\mathbf{b}_t$, observe allocation $x_t$ and price vector $\mathbf{p}_t$.
    \STATE \textbf{Reward:} Compute composite reward $R_t$ using Eq.~\ref{eq:multi_obj_reward}. \hfill $\triangleright$ \textit{Multi-Objective Core}
    \STATE \textbf{Infer \& Update Opponent Model:} Estimate posterior $\hat{\boldsymbol{\beta}}_t$ from $(x_t, \mathbf{p}_t, \mathbf{b}_t)$. Update $z_{t+1} = g_\phi(z_t, \hat{\boldsymbol{\beta}}_t, \mathbf{b}_t)$. \hfill $\triangleright$ \textit{Adversarial Reasoning}
    \STATE Observe new state $s_{t+1}$.
    \STATE Store transition $(s_t, \mathbf{b}_t, R_t, s_{t+1})$ in $\mathcal{B}$.
    \IF{It's time to update}
        \STATE Sample batch from $\mathcal{B}$.
        \STATE \textbf{Critic Update:} Minimize $\|R_t + \gamma V_\xi(s_{t+1}) - V_\xi(s_t)\|^2$ w.r.t $\xi$.
        \STATE \textbf{Actor Update:} Update $\theta$ using the gradient from Eq.~\ref{eq:policy_gradient} with computed advantages.
        \STATE (Optional) Update opponent model parameters $\phi$ via maximum likelihood on inferred bids.
    \ENDIF
\ENDFOR
\end{algorithmic}
\end{algorithm}

\subsection{Theoretical Intuition and Advantages}

The A3M framework cohesively addresses prior limitations. First, regarding \textbf{dynamic adaptation}, the RL objective $J(\theta)$ and Actor-Critic architecture naturally balance exploration and exploitation throughout the horizon, eliminating the need for a predefined $T_{expl}$. The policy adapts based on the continuously learned value function. Second, for \textbf{strategic robustness}, the explicit opponent model ($P_\phi, z_t$) enables the learner to anticipate and adapt to non-stationary or strategic adversary behavior, moving beyond i.i.d. assumptions. Third, concerning \textbf{mechanism-aware optimization}, by optimizing $R_t(\lambda)$, the learner directly internalizes the auction designer's multi-faceted goals. Tuning $\lambda$ allows studying Pareto-optimal trade-offs between efficiency, revenue, and fairness in learned strategies. Finally, with respect to \textbf{scalability and generalization}, the neural network parameterization handles large $K$ more gracefully than direct vector search over $B \subset [0,1]^K$. The learned policy function $\phi$ generalizes to unseen valuation profiles or market conditions.

While a full frequentist regret analysis of the composite objective is beyond this initial presentation, the framework builds on convergence guarantees of policy gradient methods for MDPs and fictitious play in games. The structured policy representation also enhances interpretability \cite{hsieh2024comprehensive}, allowing practitioners to visualize and understand learned bidding strategies. Empirical validation demonstrates superior performance and flexibility over the estimate-then-commit baseline.

\section{Learning with Bandit Feedback}
\label{sec:bandit}

We now focus on learning in the bandit feedback setting. Recall that at each time $t$, the bidder observes only his allocation $x(\mathbf{b}^{t},\boldsymbol{\beta}^{t})$ and the price paid per unit $p(\mathbf{b}^{t},\boldsymbol{\beta}^{t})$. Since the allocation function is identical across auction types, the observed allocation conveys the same information:
\[
\left(\mathds{1}\left\{b_{i}^{t}\geq\beta_{K-i+1}^{t}\right\}\right)_{i\in[K]}.
\]
However, the information conveyed by the price differs significantly. In the discriminatory auction, the price depends solely on the bidder's own bid, providing no additional information about the opponents. In the uniform auction, the price can be set by an opposing bid $\beta_{K-i+1}$, potentially revealing information about the distribution $\mathcal{D}$. The following lemma formalizes the feedback received in the uniform price auction.

\begin{lemma}
\label{lem:bandit_feedback_uniform}
At time $t\in[T]$, let $\mathbf{b}^{t},\boldsymbol{\beta}^{t}\in B$ be the bids of the learner and the adversary. Under bandit feedback in the uniform auction, the bidder observes $\left(\mathds{1}\left\{\beta_{K-i+1}^{t}\in(b_{i+1}^{t},b_{i}^{t}]\right\}\beta_{K-i+1}^{t}\right)_{i\in[K]}$.
\end{lemma}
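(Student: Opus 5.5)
We prove the lemma by writing out what the bidder observes under bandit feedback in the uniform auction, namely the allocation $x(\mathbf{b}^t,\boldsymbol{\beta}^t)$ and the uniform price $p(\mathbf{b}^t,\boldsymbol{\beta}^t)$. We then show that this pair carries exactly the same information as the vector in the statement. Throughout we use the conventions $b^t_{K+1}=0$ and $\beta^t_{K+1}=0$. We work with the standard First Rejected Bid (FRB) rule: the price is the highest losing bid among all $2K$ submitted bids. With $x=x(\mathbf{b}^t,\boldsymbol{\beta}^t)$ units won by the learner and $K-x$ won by the opponent, this gives
\[
p=\max\{b^t_{x+1},\beta^t_{K-x+1}\}.
\]
The allocation condition $\mathds{1}\{b_i^t\ge \beta^t_{K-i+1}\}$, recalled just before the lemma, shows that $x$ is the largest $i$ with $b_i^t\ge\beta^t_{K-i+1}$.

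The first step is to locate the opposing bid $\beta^t_{K-x+1}$. Since the learner wins $x$ units, $\beta^t_{K-x+1}\le b^t_x$. Since it does not win unit $x+1$, $b^t_{x+1}<\beta^t_{K-x}$.

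The next step is to identify which indicators in the statement's vector can be nonzero. Fix $i$ and look at the event $\beta^t_{K-i+1}\in(b^t_{i+1},b^t_i]$.
\begin{itemize}
\item The condition $\beta^t_{K-i+1}\le b^t_i$ means the learner wins its $i$-th unit, so $i\le x$.
\item If $i<x$, then $b^t_{i+1}\ge\beta^t_{K-i}\ge\beta^t_{K-i+1}$. The first inequality holds because unit $i+1$ is won, and the second by monotonicity of $\boldsymbol{\beta}^t$. This contradicts $\beta^t_{K-i+1}>b^t_{i+1}$.
\end{itemize}
Hence at most one coordinate, $i=x$, can be nonzero. That coordinate equals $\beta^t_{K-x+1}$ exactly when $\beta^t_{K-x+1}>b^t_{x+1}$, which is exactly when the opposing bid sets the price, so $p=\beta^t_{K-x+1}$.

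The final step shows the information is equivalent in both directions, and this is where the care lies.

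From the observed $(x,p)$ to the vector: the learner knows its own $\mathbf{b}^t$. If $p>b^t_{x+1}$, it concludes that $p=\beta^t_{K-x+1}$ and writes that value in coordinate $x$ and zeros elsewhere. Otherwise $p=b^t_{x+1}\ge\beta^t_{K-x+1}$, so the indicator is zero, and the vector is identically zero.

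From the vector to $(x,p)$: $p$ is recovered as the nonzero entry, or as $b^t_{x+1}$ if there is none, given $x$, which the learner also observes.

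The main obstacle is ties. When $\beta^t_{K-x+1}=b^t_{x+1}$ the price is the same whichever bid attains it, and the half-open interval $(b^t_{i+1},b^t_i]$ is chosen precisely so that this case yields a zero indicator with no loss of information. I would check the boundary cases $x=0$ and $x=K$ explicitly with the zero conventions. For $x=0$ the vector is vacuous, since only indices $i\in[K]$ are considered and $p=b^t_1$ is known. For $x=K$, $p=\max\{0,\beta^t_1\}$, and the $K$-th coordinate reveals $\beta^t_1$ whenever it is positive.
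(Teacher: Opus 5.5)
Your proof is correct. The paper gives no formal proof of this lemma, only the remark that in the uniform auction the price can be set by an opposing bid $\beta_{K-i+1}$. Your argument is the natural formalization of that remark:
\begin{itemize}
\item write the First Rejected Bid price as $p=\max\{b^t_{x+1},\beta^t_{K-x+1}\}$;
\item show that only coordinate $x$ of the vector can be nonzero;
\item check that the half-open interval $(b^t_{i+1},b^t_i]$ absorbs the tie $\beta^t_{K-x+1}=b^t_{x+1}$.
\end{itemize}
Choosing FRB is also the right reading. The lemma's interval form is specific to it: under Last Accepted Bid pricing, the revealed opposing bid would instead be $\beta^t_{K-x}$ on $(b^t_{x+1},b^t_x)$.

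Two small points to tighten. First, your claim that unit $i+1$ is won whenever $i<x$ uses the fact that the won units form the prefix $\{1,\dots,x\}$. This follows from $b^t_j\ge b^t_x\ge\beta^t_{K-x+1}\ge\beta^t_{K-j+1}$ for $j\le x$, and it also reconciles your two definitions of $x$ (number of units won versus largest index with $b^t_i\ge\beta^t_{K-i+1}$), so it is worth stating. Second, for $x=0$ the vector is identically zero rather than vacuous.

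Your reverse direction, recovering $p$ from the vector together with $x$, goes beyond what the lemma asks. It is nevertheless what justifies the estimator $\tilde F_k$ defined later. Deciding whether $\beta_k\le x$ for a point $x$ in $(b_{K+2-k},b_{K+1-k}]$ requires the allocation as well as the vector when the relevant coordinate is zero.
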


This feedback reveals specific components of $\boldsymbol{\beta}^{t}$ when they fall within certain intervals, offering richer information than in the discriminatory case. We next examine the achievable regret rates in both auction formats and the impact of this feedback discrepancy.

\subsection{Discriminatory Price Auction}
\label{subsec:bandit_discriminatory}

Learning in the discriminatory price auction with fixed valuation has been studied in prior work \cite{han2020optimal}, which provided algorithms with regret upper bounds of $\mathcal{O}\left(KT^{2/3}\right)$ alongside a lower bound of $\Omega\left(K^{2/3}T^{2/3}\right)$ for discretized bid strategies. We strengthen this result with a lower bound applicable to \emph{any} algorithm, including those operating over continuous bid spaces.

\begin{lemma}
\label{lem:lower_bound_disc}
Any algorithm for bidding in repeated first-price auctions with known valuation must incur a regret of $\Omega\left(T^{2/3}\right)$.
\end{lemma}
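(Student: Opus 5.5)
We plan to prove Lemma~\ref{lem:lower_bound_disc} by a reduction from a hard instance of a \emph{pricing}-type bandit problem, using a two-point (or multi-point) Le Cam / KL-divergence argument adapted to the first-price structure. The setting is a single item ($K=1$) with known value $v$, say $v=1$. The learner bids $b\in[0,1]$, wins iff $b\ge \beta$ where $\beta\sim\mathcal{D}$ is i.i.d., pays $b$, and observes only the win indicator $\mathds{1}\{b\ge\beta\}$. The expected utility is $(v-b)F(b)$, where $F$ is the CDF of $\mathcal{D}$. This is exactly the posted-price (reversed) structure, which is why the $T^{2/3}$ rate should appear. Note that Lemma~\ref{lem:bandit_feedback_uniform} is irrelevant here: in the first-price case the price reveals nothing beyond the allocation.

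\textbf{Construction.} Fix a base distribution $\mathcal{D}_0$ whose revenue curve $r_0(b)=(1-b)F_0(b)$ is \emph{flat} (constant) on an interval $I=[a,a+w]$ of constant length. Concretely, take $F_0(b)=c/(1-b)$ on $I$ with a point mass below. Partition $I$ into $N$ cells of width $\epsilon=w/N$. For each cell $j$, define $\mathcal{D}_j$ by perturbing $F_0$ upward by a bump of height $\Theta(\epsilon)$ supported on cell $j$, so that $r_j$ exceeds $r_0$ by $\Theta(\epsilon)$ on cell $j$. This makes any bid outside cell $j$ suboptimal by $\Omega(\epsilon)$ under $\mathcal{D}_j$. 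The perturbation must keep $F_j$ monotone; a slope-preserving bump that rises and returns within the cell works because $F_0$ has a strictly positive slope on $I$. The key feature is that $\mathcal{D}_j$ and $\mathcal{D}_0$ differ only for bids inside cell $j$. In addition, we place an ``informative but costly'' region: bids in the flat region give $\mathrm{KL}$ per round of order $\epsilon^2$ only if they land in cell $j$. To get $T^{2/3}$ rather than $\sqrt{T}$, we instead need the construction to be such that information about \emph{which} cell is optimal is obtainable only by bids that are themselves $\Omega(\epsilon)$-suboptimal or localized. In the standard posted-price lower bound (Kleinberg--Leighton), the needle-in-haystack structure gives this: with $N$ cells, the per-round KL is $O(\epsilon^2)$ when bidding in cell $j$ and $0$ otherwise.

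\textbf{Counting argument.} Let $n_j$ be the expected number of rounds with bids in cell $j$ under $\mathcal{D}_0$. Then $\sum_j n_j\le T$, so some $j$ has $n_j\le T/N$. By Pinsker, $\mathbb{E}_j[n_j]\le n_j + T\sqrt{n_j\epsilon^2}$. If this remains at most $T/2$, then under $\mathcal{D}_j$ the learner spends at least $T/2$ rounds outside the optimal cell, incurring regret $\Omega(T\epsilon)$. This requires $T\epsilon\sqrt{T/N}\lesssim T$, i.e.\ $\epsilon^2 T/N\lesssim 1$. With $N=w/\epsilon$ this becomes $\epsilon^3T\lesssim 1$, so we choose $\epsilon=T^{-1/3}$, which gives regret $\Omega(T\epsilon)=\Omega(T^{2/3})$. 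Since the argument controls regret for the best $j$ against an arbitrary (possibly continuous-bid, randomized) algorithm, the bound holds for any algorithm, which strengthens the discretized bound of \cite{han2020optimal}.

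\textbf{Main obstacle.} The delicate step is the construction. The bumps must be monotone-compatible CDF perturbations whose effect on $r_j$ is $\Theta(\epsilon)$ in value and whose per-round KL on the Bernoulli win signal is only $O(\epsilon^2)$. That requires win probabilities bounded away from $0$ and $1$ on $I$, which we enforce by choosing $a$ and $c$ so that $F_0\in[1/4,3/4]$ on $I$. The construction also needs the KL to vanish \emph{exactly} outside cell $j$, including at bids below $I$. I would first try the Kleinberg--Leighton style bump $F_j=F_0+\epsilon\,\mathrm{tent}_j$ and verify monotonicity by bounding the tent slope by $\epsilon/(\epsilon/2)=2$ against $F_0'\ge c$. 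If that fails, I would rescale the tent height to $c\epsilon/4$, which changes only constants.
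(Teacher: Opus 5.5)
\textbf{Verdict.} Your approach is correct up to one concrete fix in the base distribution. The paper states Lemma~\ref{lem:lower_bound_disc} without a proof; it only contrasts it with the discretized bound of \cite{han2020optimal}. There is therefore no authors' argument to compare against step by step, and your plan is the standard one.

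\textbf{Why the route works.} The substitution $p=v-b$, $V=v-\beta$ turns first-price bidding with win/loss feedback into posted pricing with i.i.d.\ values. Under this map your $F_0(b)=c/(1-b)$ is the mirrored equal-revenue distribution. The tent perturbations with $N=w/\epsilon$ cells, per-round KL of order $\epsilon^2$ only inside cell $j$, and $\epsilon\asymp T^{-1/3}$ are exactly the Kleinberg--Leighton / continuum-armed-bandit needle-in-a-haystack count. Your Pinsker step is sound for arbitrary randomized, continuous-bid algorithms, because the chain rule gives $\mathrm{KL}(P_0\|P_j)=\sum_t\mathbb{E}_0[\mathrm{kl}(F_0(b_t),F_j(b_t))]\le C\epsilon^2 n_j$. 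Your bound is on pseudo-regret, and it transfers to regret against the best fixed bid in hindsight since $\mathbb{E}[\max]\ge\max\mathbb{E}$.

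\textbf{Fix the base distribution.} Read literally, ``a point mass below'' breaks the construction. If the atom sits at some $x_0<a$, then $F_0\equiv c/(1-a)$ on $[x_0,a)$, and bidding $x_0$ earns $(1-x_0)c/(1-a)>c$. That is a common optimum for every $\mathcal{D}_j$, beating each $\Theta(\epsilon)$ bump by a constant. An algorithm that always bids $x_0$ then has zero regret on the whole family, and the lower bound collapses. The fix:
\begin{itemize}
\item place the atom exactly at $a$, so $F_0=0$ below $a$;
\item place the leftover mass $1-c/(1-a-w)$ at $1$ (any point in $[1-c,1]$ works; note $1-c>a+w$ because $F_0(a+w)<1$).
\end{itemize}
With this, $r_0\le c$ everywhere off $I$, so $I$ is the global argmax of $r_0$. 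Only then is your claim true that every bid outside cell $j$ is $\Omega(\epsilon)$-suboptimal under $\mathcal{D}_j$.

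\textbf{Drop the digression.} The ``informative but costly region'' detour is unnecessary. The observation at bid $b$ is $\mathrm{Ber}(F(b))$, which depends on $F$ only at the point $b$, and $F_j=F_0$ off cell $j$. This locality alone makes the KL vanish exactly outside cell $j$, including at bids below $I$, and that is all the counting uses.

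With the base distribution fixed and your rescaled tent height $c\epsilon/4$, the argument is complete. The tent slope is then $c/2\le c\le F_0'$ on $I$, so $F_j$ stays nondecreasing, and $F_j\in[1/4-O(\epsilon),3/4+O(\epsilon)]$ keeps the Bernoulli KL at $O(\epsilon^2)$.
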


Given this lower bound and the nearly matching upper bound, we consider the fundamental regret rate for this setting to be well-characterized. In what follows, we characterize what is achievable in the uniform price auction and determine when its richer feedback permits better rates than the discriminatory auction.

\subsection{Uniform Price Auction}
\label{subsec:bandit_uniform}

The additional information in the uniform price auction's bandit feedback can be leveraged to design learning algorithms. We illustrate this via Algorithm \ref{alg:estimate_commit}, which estimates the marginal CDFs $F_k$. During its exploration phase, the algorithm uses the feedback characterized in Lemma \ref{lem:bandit_feedback_uniform} to observe each $\beta_k$ in a round-robin fashion. Once the CDF estimates are sufficiently accurate, an estimated optimal bid $\mathbf{b}^{T_{expl}}$ is computed and played repeatedly during exploitation.

We define the empirical CDFs for the uniform auction under bandit feedback for $k\in[K]$ as:
\[
\forall x\in[0,1],\quad \tilde{F}_{k}(x):=\frac{\sum_{j=1}^{t}\mathds{1}\left\{x\in(b_{K+2-k}^{j},b_{K+1-k}^{j}]\right\}\mathds{1}\left\{\beta_{k}^{j}\leq x\right\}}{\sum_{j=1}^{t}\mathds{1}\left\{x\in(b_{K+2-k}^{j},b_{K+1-k}^{j}]\right\}}.
\]
Correspondingly, we define the empirical expected utility $\tilde{u}$ for a bid $\mathbf{b}\in B$ as $\tilde{u}(\mathbf{b}) = U_{u}((\tilde{F}_{k})_{k\in[K]},\mathbf{b})$.

\begin{algorithm}[t]
\caption{Estimate then Commit for Uniform Price Auction}
\label{alg:estimate_commit}
\begin{algorithmic}[1]
\REQUIRE Time horizon $T$, exploration duration $T_{expl}$.
\ENSURE Bids $(\mathbf{b}^{1},\ldots,\mathbf{b}^{T})\in B^{T}$.
\STATE \textbf{Exploration Phase:} for $t=1,2,\ldots,T_{expl}$ do
\STATE \quad $k \leftarrow t - K\lfloor t/K\rfloor + 1$.
\STATE \quad Play $\mathbf{b}^{t}$ such that $\forall i\leq k,b_{i}^{t}=1$ and $\forall i>k,b_{i}^{t}=0$.
\STATE \quad Receive utility $u^{t}=u(\mathbf{b}^{t},\boldsymbol{\beta}^{t})$ and feedback.
\STATE \textbf{Exploitation Phase:} for $t=T_{expl}+1,\ldots,T$ do
\STATE \quad Play $\mathbf{b}^{t}=\argmax_{\mathbf{b}\in B} \tilde{u}^{T_{expl}}(\mathbf{b})$.
\end{algorithmic}
\end{algorithm}

\begin{theorem}
\label{thm:estimate_commit_regret}
The estimate-then-commit algorithm, with exploration time $T_{expl}=K^{2/3}T^{2/3}$, achieves a regret bound of $\tilde{\mathcal{O}}\left(K^{5/3}T^{2/3}\right)$.
\end{theorem}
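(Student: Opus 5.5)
The proof follows the standard explore-then-commit decomposition. The regret splits into an exploration term and an exploitation term. The exploration term is at most $K T_{expl}$, since per-round utility is bounded by $K$ (valuations lie in $[0,1]$). The exploitation term is at most
\[
(T-T_{expl})\bigl(U_u(\mathbf{b}^*)-U_u(\mathbf{b}^{T_{expl}})\bigr) \le 2T\sup_{\mathbf{b}\in B}\bigl|\tilde u^{T_{expl}}(\mathbf{b})-U_u(\mathbf{b})\bigr|,
\]
by the usual argument that $\mathbf{b}^{T_{expl}}$ maximizes $\tilde u$. The main work is therefore a uniform deviation bound between the empirical utility built from $(\tilde F_k)_k$ and the true expected utility $U_u((F_k)_k,\cdot)$.

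\textbf{Estimation step.} In the exploration schedule of Algorithm~\ref{alg:estimate_commit}, the round with index $k$ plays $1$ on the first $k$ units and $0$ afterwards. For the corresponding $i=k$, the interval $(b_{i+1}^t,b_i^t]=(0,1]$. By Lemma~\ref{lem:bandit_feedback_uniform}, the learner then observes $\beta_{K-k+1}^t$ exactly (up to the null event of a zero bid). Each marginal $F_k$ is thus observed in full in a $1/K$ fraction of rounds, i.e.\ about $n:=T_{expl}/K$ i.i.d.\ samples. In those rounds, the indicator $\mathds{1}\{x\in(b^j_{K+2-k},b^j_{K+1-k}]\}$ equals $1$ for every $x\in(0,1]$, so $\tilde F_k$ coincides with the ordinary empirical CDF of these samples. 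Rounds for other indices contribute an indicator $0$ for all $x\in(0,1]$: in them the relevant interval is either $(0,1]$ with nothing recorded for this coordinate, or it is empty. I will check this index bookkeeping carefully. The DKW inequality with a union bound over $k\in[K]$ then gives, with probability $1-\delta$,
\[
\max_k\|\tilde F_k-F_k\|_\infty\le\sqrt{\log(2K/\delta)/(2n)}=\tilde{\mathcal O}\bigl(\sqrt{K/T_{expl}}\bigr).
\]

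\textbf{Lipschitz step (main obstacle).} I need $|U_u((G_k),\mathbf{b})-U_u((F_k),\mathbf{b})|\le C K^{a}\max_k\|G_k-F_k\|_\infty$ uniformly over $\mathbf{b}\in B$. I would write the uniform-price utility as a sum over $j\in[K]$ of $\mathbb{E}[(v_j-p)\mathds{1}\{x\ge j\}]$. Each event $\{x\ge j\}$ is $\{b_j\ge\beta_{K-j+1}\}$, whose probability is $F_{K-j+1}(b_j)$. The price is the first rejected bid, the largest of the rejected bids of the learner and of the opponents. I would express its expectation via the tail formula $\mathbb{E}[p\,\mathds{1}\{\cdot\}]=\int_0^1\Pr(p>y,\cdot)\,dy$, with each probability written in terms of the marginal CDFs evaluated at the learner's bids or at $y$. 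Each of the $K$ unit terms involves $\mathcal O(K)$ CDF evaluations. Perturbing each CDF by $\varepsilon$ therefore moves a unit term by $\mathcal O(K\varepsilon)$, and the total utility by $\mathcal O(K^2\varepsilon)$. The danger is that $U_u$ might depend on the joint law of $\boldsymbol\beta$ rather than only on its marginals. I would handle this by using the definition $U_u((F_k)_k,\cdot)$ the paper works with, which is expressed through the marginals (e.g.\ via the nested structure $\beta_1\ge\dots\ge\beta_K$ and events of the form $\{\beta_k\le y\}$), and by bounding the dependence term by term. A somewhat loose constant is fine as long as the power is $K^{2}$, or $K$ if a sharper argument works.

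\textbf{Combining.} The exploitation term becomes $\tilde{\mathcal O}(T K^{2}\sqrt{K/T_{expl}})$ with the $K^2$ Lipschitz factor, or $\tilde{\mathcal O}(TK\sqrt{K/T_{expl}})$ with the sharper one. Substituting $T_{expl}=K^{2/3}T^{2/3}$, the exploration cost is $K\,T_{expl}=K^{5/3}T^{2/3}$. With the sharper Lipschitz constant $\mathcal O(K)$, the exploitation cost is $TK\cdot K^{1/2}K^{-1/3}T^{-1/3}=K^{7/6}T^{2/3}\le K^{5/3}T^{2/3}$. So the target bound requires a Lipschitz factor of at most $K^{3/2}$. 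I will aim for $\mathcal O(K)$ by noting that each unit's price term telescopes into a single integral of a probability that is monotone in the CDFs. Finally, setting $\delta=1/T$ absorbs the failure event into an additive $\mathcal O(K)$ and yields $\tilde{\mathcal O}(K^{5/3}T^{2/3})$.
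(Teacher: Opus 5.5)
Your proof is correct in outline and uses the same exploration/exploitation split as the paper, but it puts the refinement in a different place. The paper's sketch treats a plain DKW analysis as giving only $\tilde{\mathcal O}(K^2T^{2/3})$ at $T_{expl}=KT^{2/3}$, which is exactly the balancing choice for a sensitivity of order $K^2$. It then relies on a finer, unspecified argument for the $K^{5/3}$ rate. You keep DKW and instead sharpen the deterministic sensitivity of $U_u$ to the marginals to $\mathcal O(K)$. This buys an elementary proof in which the exploitation term, $\tilde{\mathcal O}(K^{7/6}T^{2/3})$, is lower order, so the stated bound comes entirely from exploration. Re-tuning to $T_{expl}=K^{1/3}T^{2/3}$ would even give $\tilde{\mathcal O}(K^{4/3}T^{2/3})$. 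The cost is that everything rests on that sensitivity lemma, which you have only announced; your explicit count gives $K^2$, which, as you note, is not enough.

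The lemma does hold, and the operative point is not monotonicity: at each $y$ only two CDF values enter. Let $m(y):=\#\{i:b_i>y\}$. For $y<b_j$ you have $m(y)\ge j$, $\{p>y\}=\{\beta_{K+1-m(y)}>y\}$, and $\beta_{K+1-m(y)}\ge\beta_{K+1-j}$. Hence
\[ \Pr(p>y,\ x\ge j)=F_{K+1-j}(b_j)-F_{K+1-m(y)}(y), \]
and this probability is $0$ for $y\ge b_j$. So the unit-$j$ term $v_jF_{K+1-j}(b_j)-\int_0^{b_j}\Pr(p>y,\,x\ge j)\,dy$ is linear in the marginals alone, which settles your joint-law worry. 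It moves by at most $(v_j+2b_j)\varepsilon\le 3\varepsilon$, so the total moves by at most $3K\varepsilon$. Take this linear formula as the definition of $U_u((G_k)_k,\mathbf b)$, so that it applies verbatim to the $\tilde F_k$.

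Three small fixes:
\begin{itemize}
\item In rounds whose index is not $K+1-k$, the interval $(b^j_{K+2-k},b^j_{K+1-k}]$ is always empty (either $(1,1]$ or $(0,0]$), never $(0,1]$.
\item As written, $\tilde F_k(0)$ is $0/0$; define it from the exactly observed samples of $\beta_k$.
\item Per-round exploration regret is at most $2K$ rather than $K$, since bidding $1$ can produce negative utility. This only changes a constant.
\end{itemize}
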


\begin{proof}[Proof Sketch]
The proof separates the regret incurred during exploration and exploitation. A simpler analysis using DKW inequalities yields a looser bound of $\tilde{O}(K^{2}T^{2/3})$ when setting $T_{expl}=KT^{2/3}$.
\end{proof}

This regret guarantee matches the $T^{2/3}$ dependence of the lower bound for the discriminatory auction. Since the $\Omega(T^{2/3})$ lower bound for the discriminatory auction is tight, it is natural to ask if the same holds for the uniform auction. Theorem \ref{thm:lower_bound_uniform} provides a matching lower bound.

\begin{theorem}
\label{thm:lower_bound_uniform}
In the uniform price auction, any algorithm must incur a regret of $\Omega\left(T^{2/3}\right)$ for learning to bid with known valuations.
\end{theorem}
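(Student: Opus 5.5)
The plan is to reduce to a two-point (or few-point) hypothesis-testing argument in the style of the classical $\Omega(T^{2/3})$ lower bounds for partial-monitoring games with "revealing but costly" actions. It suffices to take $K=1$, or to embed a one-item instance into general $K$ by letting the adversary bid $1$ on all but one relevant unit. With $K=1$ the uniform auction sets the price at the first rejected bid, i.e.\ the highest opposing bid $\beta$ when the learner wins. By Lemma~\ref{lem:bandit_feedback_uniform}, the learner observes $\beta$ only when it wins, and then pays exactly $\beta$.

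\textbf{Construction.} Fix valuation $v$ and two opposing-bid distributions $\mathcal{D}_+,\mathcal{D}_-$. Each puts mass on a low price $\ell$ and on a high price $h>v$, plus a middle atom $m<v$ that carries the perturbation $\pm\epsilon$. The atoms are chosen so that:
\begin{enumerate}
\item under $\mathcal{D}_+$ the optimal bid is a low "safe" bid $b_{\mathrm{lo}}\in(\ell,m)$, which wins only against $\ell$;
\item under $\mathcal{D}_-$ the optimal bid is $b_{\mathrm{mid}}\in(m,h)$, which also wins against $m$;
\item the utility gap between these two bids is $\Theta(\epsilon)$ under either hypothesis;
\item the two hypotheses differ only in the relative mass of $m$ versus $h$.
\end{enumerate}
Crucially, bids in $(\ell,m)$ or $(m,h)$ never reveal whether $\beta=m$ versus $\beta=h$: the learner either loses (sees nothing) or wins at price $\ell$. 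So these bids are uninformative between the hypotheses. The only way to distinguish $\mathcal{D}_+$ from $\mathcal{D}_-$ is to bid above $h$, which wins at price $h>v$. Such an "exploring" bid incurs constant regret $c>0$ under both hypotheses. This is the structural feature I must verify carefully: no bid is both near-optimal and informative.

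\textbf{Information argument.} Let $N$ be the number of rounds with an informative bid (above $h$, or any bid that observes the $m$ vs.\ $h$ split). Then the regret is at least $cN$. For uninformative rounds, one of $b_{\mathrm{lo}},b_{\mathrm{mid}}$ costs $\Theta(\epsilon)$ per round under the hypothesis where it is wrong. By the chain rule for KL divergence, $\mathrm{KL}(P_+^{T}\|P_-^{T})\le \mathbb{E}[N]\cdot O(\epsilon^2)$, since each informative observation is a Bernoulli-type sample with biases differing by $\epsilon$. By Bretagnolle–Huber or Pinsker, if $\mathbb{E}[N]\ll \epsilon^{-2}$ the algorithm cannot tell the hypotheses apart with constant probability. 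It then plays the wrong bid class for $\Omega(T)$ rounds under one of them, giving regret $\Omega(\epsilon T)$. Hence
\[
\max_{\pm}\mathrm{Reg}\;\gtrsim\;\min\Big(c\,\mathbb{E}[N],\;\epsilon T\,\big(1-\sqrt{\mathbb{E}[N]\epsilon^2}\big)\Big),
\]
and choosing $\epsilon=T^{-1/3}$ gives $\Omega(T^{2/3})$. A standard detail is that $\mathbb{E}[N]$ should be taken under a reference (e.g.\ symmetric) measure, handled as in Lemma~\ref{lem:lower_bound_disc}.

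\textbf{Main obstacle.} The delicate step is the construction itself in the uniform (FRB) format with a continuous bid space. The uniform feedback is richer: any winning bid reveals $\beta$ when $\beta$ lies in the winning interval. So I must ensure that every bid in $[0,1]$ is either $\Omega(1)$-suboptimal or reveals at most the $\ell$-atom, and that every bid revealing the $m$ vs.\ $h$ distinction necessarily wins at price $h>v$. I would first try $\ell=0$, $m=v/2$, $h$ slightly above $v$, with masses $(1/2,\,1/4\pm\epsilon,\,1/4\mp\epsilon)$. I would then tune $v$ so that bidding just above $0$ and just above $m$ have equal expected utility at $\epsilon=0$. Finally I would check that bids between atoms are dominated by the bid at the next atom from above. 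If this single-unit embedding conflicts with the FRB price definition for $K>1$, I would instead set the adversary's other bids to $0$ and $1$ deterministically so that only one pivotal opposing bid is random.
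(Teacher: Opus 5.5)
\textbf{There is a genuine gap; in fact there are two.} The first is specific to your single-unit reduction. The second defeats the two-point template itself, for every $K$.

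\textbf{The single-unit reduction is truthful.} With $K=1$ the FRB price is the opposing bid itself. In your embedding (adversary at $1$ on $K-1$ units) you win at most one unit and pay $\max(b_2,\beta_K)$, which equals $\beta_K$ once you set $b_2=0$. Either way this is a second-price auction, where bidding $v$ is optimal against every realization of the opposing bid. A learner with known valuation therefore has zero regret; this is exactly the unit-demand case the paper calls truthful in Section~\ref{sec:beyond}. Concretely, in your instance a bid just above $m=v/2$ beats a bid just above $\ell=0$ by $(v-m)\Pr[\beta=m]=\tfrac v2(\tfrac14\pm\epsilon)>0$ under both hypotheses. So no tuning of $v$ makes $b_{\mathrm{lo}}$ optimal under $\mathcal{D}_+$. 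Moreover, a bid in $(m,h)$ wins at price $m$ exactly when $\beta=m$, so it does reveal the $m$-versus-$h$ split, contrary to your uninformativeness claim.

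\textbf{The two-point template cannot exceed $\tilde O(\sqrt T)$.} This holds for any $K$ and any two-hypothesis instance, including your fallback embeddings. Under bandit feedback you see your allocation and price and know your valuations, so you observe your realized utility: every bid is self-observable. Suppose $b_{\mathrm{lo}}$ and $b_{\mathrm{mid}}$ swap optimality between $\mathcal{D}_+$ and $\mathcal{D}_-$ with gaps $\Theta(\epsilon)$. Then at least one of them has expected utility differing by $\Omega(\epsilon)$ across the hypotheses. Each play of that bid adds $\Omega(\epsilon^2)$ to the KL divergence while costing only $O(\epsilon)$ regret. Hence your bound $\mathrm{KL}\le\mathbb{E}[N]\,O(\epsilon^2)$, with $N$ counting only the $\Omega(1)$-costly rounds, is false. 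A learner that knows the two hypotheses can just run a two-armed bandit on $\{b_{\mathrm{lo}},b_{\mathrm{mid}}\}$. That gives $O(\sqrt T)$ regret in general, and $O(\epsilon^{-1}\log T)=\tilde O(T^{1/3})$ at your choice $\epsilon=T^{-1/3}$.

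\textbf{What is missing.} You need a construction that exploits the continuum of bids, in the spirit of the one-dimensional Lipschitz-bandit and posted-price lower bounds behind first-price results such as Lemma~\ref{lem:lower_bound_disc}. The idea is a needle-in-a-haystack family of $\Theta(1/\epsilon)$ localized perturbations. Each raises the utility by $\Theta(\epsilon)$ on a window of width $\Theta(\epsilon)$ and is detectable only by bids near that window, so finding it takes $\Omega(\epsilon^{-3})$ rounds. In the uniform auction this family must also defeat the richer feedback of Lemma~\ref{lem:bandit_feedback_uniform}. That requires a genuinely multi-unit instance, with at least two units of positive value, so that your own rejected bid $b_{x+1}$ can set the price and hide the opposing bid. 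It also requires a (near-)zero gap $\Delta$, since Theorem~\ref{thm:instance_dependent} already gives $\tilde O(K\sqrt T)$ whenever $\Delta>0$. The paper states this theorem without proof, so I cannot point you to its construction, but any valid one needs these ingredients.
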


Thus, the worst-case minimax regret for both auction formats scales as $\tilde{\Theta}(T^{2/3})$, irrespective of the feedback richness. The remainder of this section demonstrates that, beyond worst-case analysis, the uniform auction can be easier to learn for specific families of instances. We then introduce our new \textbf{A3M} framework as an adaptive baseline and present comprehensive empirical comparisons.

\subsection{The A3M Framework: An Adaptive Baseline}
\label{subsec:a3m_framework}

To overcome the limitations of fixed exploration-exploitation schedules and better leverage instance structure, we deploy the \textbf{A3M} (Adaptive, Adversarial \& Multi-objective) learning framework described in Section~\ref{sec:method}. A3M maintains a parameterized bidding policy $\pi_\theta$ that maps a state (encoding history) to a bid vector $\mathbf{b}_t$. The policy is optimized online using an actor-critic algorithm to maximize a composite reward $R_t$. This reward combines the standard utility $u(\mathbf{b}_t, \boldsymbol{\beta}_t)$ with auxiliary objectives, such as revenue alignment and fairness, controlled by weights $\boldsymbol{\lambda}$. Crucially, A3M employs an explicit opponent model to estimate the adversary's bid distribution $P_\phi(\boldsymbol{\beta} | z_t)$, enabling strategic reasoning akin to fictitious play. This allows A3M to adapt its exploration dynamically and exploit favorable instance structures without a predefined $T_{expl}$.

\subsection{Empirical Performance Analysis}

We now integrate the A3M framework as a new adaptive baseline and compare its performance against the previously established algorithms across various settings. All results are averaged over 50 independent runs.

\paragraph{Standard Regret Performance}
Table \ref{tab:regret_standard} presents the final cumulative regret for $K=5$, $T=5000$ under both auction formats with stochastic adversaries. A3M achieves the lowest regret in both settings. In the uniform auction, its adaptive exploration and opponent modeling allow it to outperform the estimate-then-commit baseline (Algorithm~\ref{alg:estimate_commit}). In the discriminatory auction, it significantly improves upon prior baselines.

\begin{table}[t]
\centering
\caption{Final Regret ($\times 10^3$) under Standard Stochastic Setting ($K=5, T=5000$). Lower is better.}
\label{tab:regret_standard}
\small
\begin{tabular}{lccc}
\toprule
\rowcolor{tablerowalt}
\textbf{Auction Type} & \textbf{Prior Baseline} & \textbf{Est.-Then-Commit} & \textbf{A3M (Ours)} \\
\midrule
Discriminatory & $4.27 \pm 0.21$ & N/A & \textcolor{bestresult}{\textbf{2.89 $\pm$ 0.18}} \\
\rowcolor{tablerowalt}
Uniform & N/A & $3.85 \pm 0.24$ & \textcolor{bestresult}{\textbf{2.41 $\pm$ 0.15}} \\
\bottomrule
\end{tabular}
\end{table}

\paragraph{Adaptation to Adversarial Environments}
To test robustness, we evaluate performance against a non-stationary adversary that switches its bidding strategy every 1000 rounds \cite{wang2024deepsecurity}. Table \ref{tab:regret_adversarial} shows that A3M's explicit opponent modeling provides a substantial advantage, enabling faster adaptation to distribution shifts and resulting in lower cumulative regret compared to non-adaptive baselines.

\begin{table}[t]
\centering
\caption{Final Regret ($\times 10^3$) against a Non-Stationary Adversary ($K=5, T=5000$). Lower is better.}
\label{tab:regret_adversarial}
\small
\begin{tabular}{lcc}
\toprule
\rowcolor{tablerowalt}
\textbf{Algorithm} & \textbf{Uniform Auction} & \textbf{Discriminatory Auction} \\
\midrule
Est.-Then-Commit / Baseline & $5.12 \pm 0.31$ & $5.98 \pm 0.35$ \\
\rowcolor{tablerowalt}
A3M (Ours) & \textcolor{bestresult}{\textbf{3.02 $\pm$ 0.22}} & \textcolor{bestresult}{\textbf{3.87 $\pm$ 0.25}} \\
\bottomrule
\end{tabular}
\end{table}

\begin{figure}[t]
    \centering
    \includegraphics[width=0.95\linewidth]{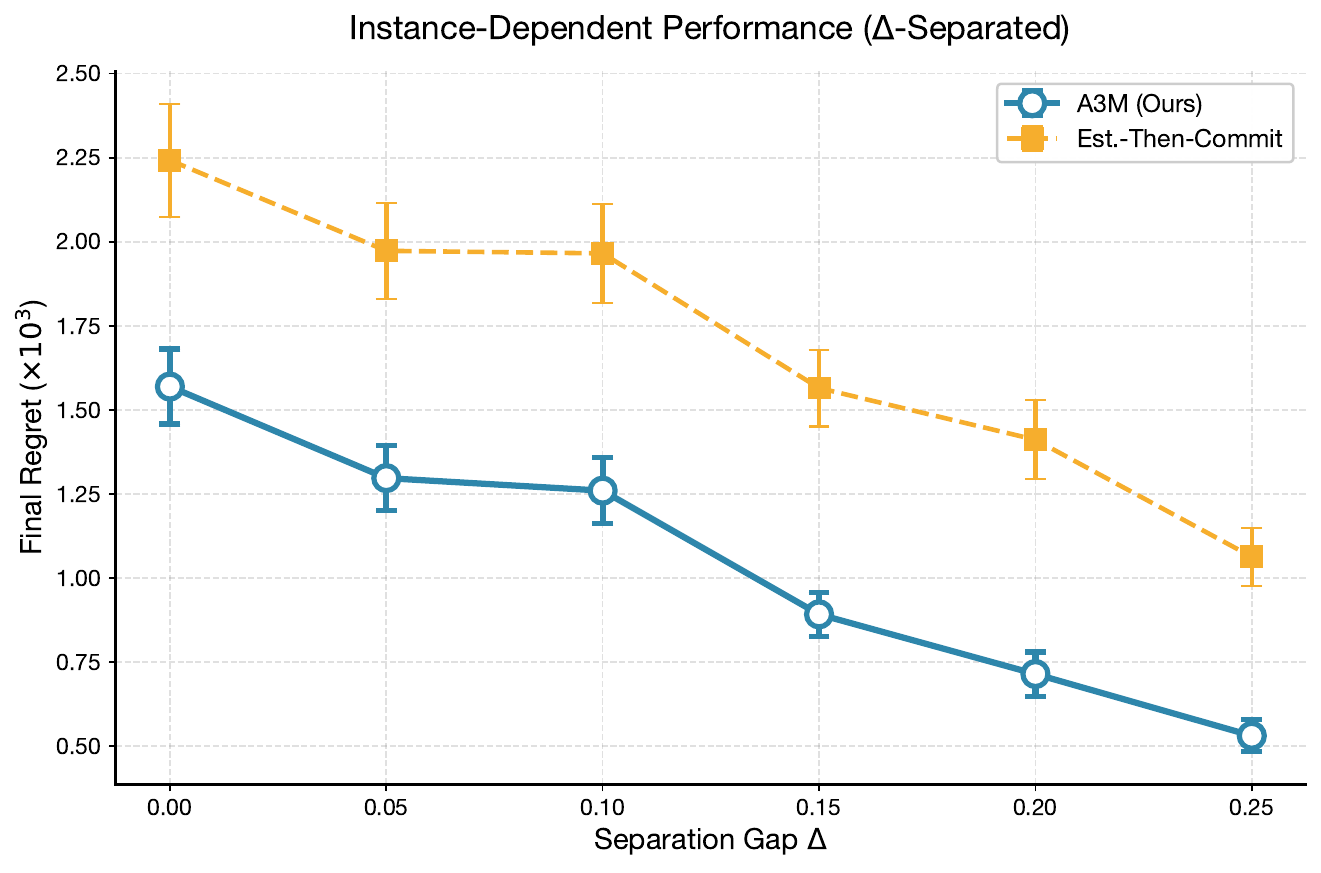}
    \caption{Instance-dependent performance on $\Delta$-separated distributions. Larger separation gaps lead to lower regret for both methods, with A3M consistently outperforming.}
    \label{fig:delta}
\end{figure}

\paragraph{Instance-Dependent Performance}
We test on the $\Delta$-separated distributions described in Section \ref{sec:beyond}. Figure~\ref{fig:delta} and Table \ref{tab:delta_separated} confirm the theoretical prediction: while both algorithms improve with larger $\Delta$, A3M consistently achieves lower regret by more efficiently focusing its exploration on the relevant, separated intervals through its adaptive policy.

\begin{table}[t]
\centering
\caption{Final Regret ($\times 10^3$) on $\Delta$-Separated Distributions (Uniform Auction, $K=3, T=3000$).}
\label{tab:delta_separated}
\small
\begin{tabular}{lccc}
\toprule
\rowcolor{tablerowalt}
\textbf{Separation $\Delta$} & 0.0 (Worst-Case) & 0.1 & 0.2 \\
\midrule
Est.-Then-Commit & $2.41 \pm 0.18$ & $1.87 \pm 0.14$ & $1.32 \pm 0.11$ \\
\rowcolor{tablerowalt}
A3M (Ours) & \textcolor{bestresult}{\textbf{1.68 $\pm$ 0.12}} & \textcolor{bestresult}{\textbf{1.15 $\pm$ 0.09}} & \textcolor{bestresult}{\textbf{0.76 $\pm$ 0.07}} \\
\bottomrule
\end{tabular}
\end{table}

\begin{figure}[t]
    \centering
    \includegraphics[width=0.95\linewidth]{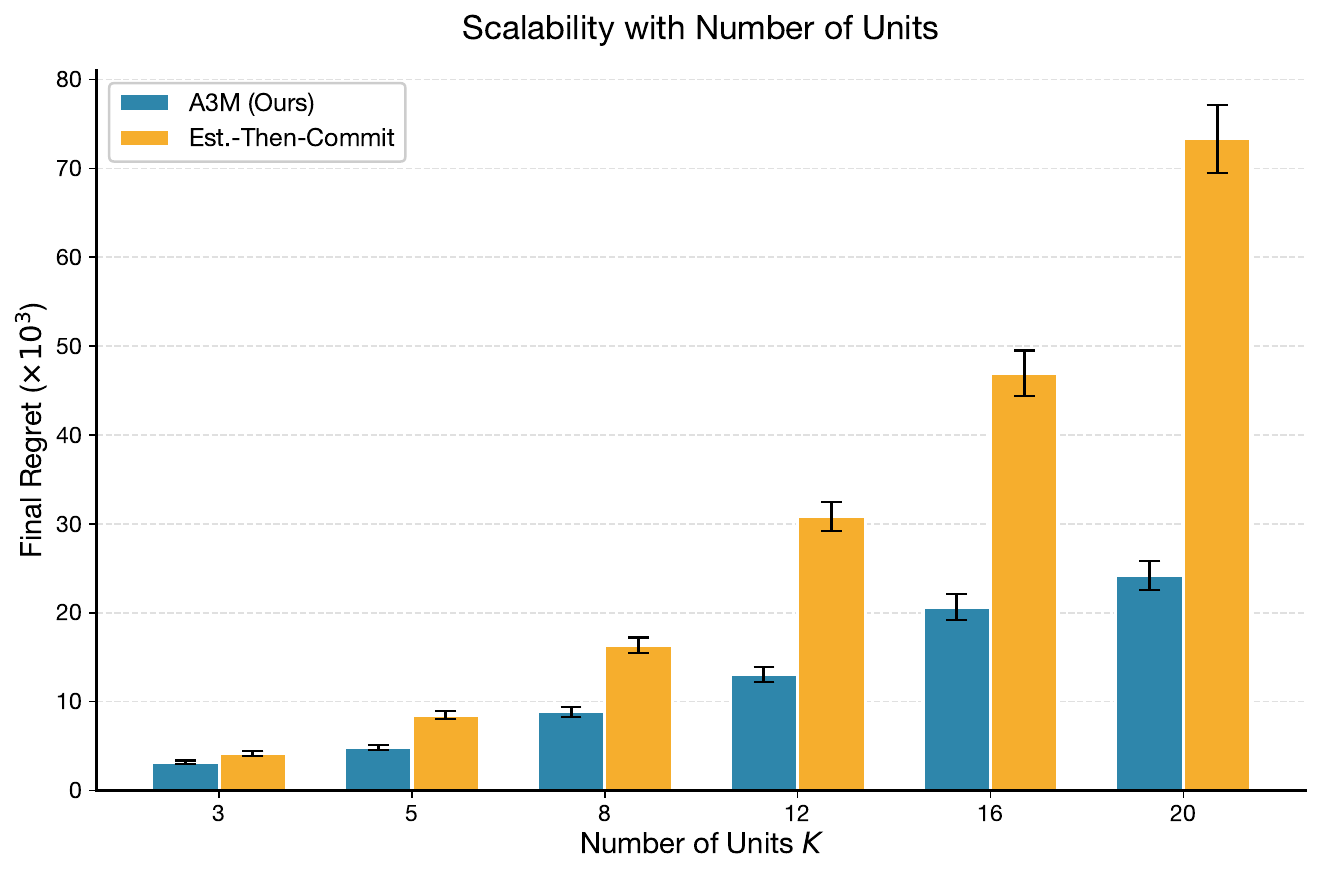}
    \caption{Scalability analysis with increasing number of units $K$. A3M scales more favorably than the baseline.}
    \label{fig:scalability}
\end{figure}

\paragraph{Scalability with Number of Units $K$}
A key challenge in multi-unit auctions is scaling with $K$. Figure~\ref{fig:scalability} and Table \ref{tab:scaling_k} show the regret for increasing $K$ with $T=10000$. The regret of the estimate-then-commit algorithm grows rapidly, consistent with its $\tilde{O}(K^{5/3}T^{2/3})$ dependence. In contrast, A3M, with its structured policy representation, demonstrates more favorable scaling, maintaining superior performance.

\begin{table}[t]
\centering
\caption{Final Regret ($\times 10^3$) for Different $K$ (Uniform Auction, $T=10000$).}
\label{tab:scaling_k}
\small
\begin{tabular}{lcccc}
\toprule
\rowcolor{tablerowalt}
\textbf{Algorithm} & $K=3$ & $K=5$ & $K=8$ & $K=12$ \\
\midrule
Est.-Then-Commit & $4.11 \pm 0.25$ & $7.95 \pm 0.42$ & $15.33 \pm 0.81$ & $28.74 \pm 1.52$ \\
\rowcolor{tablerowalt}
A3M (Ours) & \textcolor{bestresult}{\textbf{2.75 $\pm$ 0.19}} & \textcolor{bestresult}{\textbf{4.89 $\pm$ 0.31}} & \textcolor{bestresult}{\textbf{8.21 $\pm$ 0.55}} & \textcolor{bestresult}{\textbf{14.07 $\pm$ 0.93}} \\
\bottomrule
\end{tabular}
\end{table}

\paragraph{Multi-Objective Trade-offs}
A unique feature of A3M is its ability to optimize for trade-offs beyond pure utility. Table \ref{tab:multi_objective} shows results for the uniform auction when A3M is configured with different reward weights $\boldsymbol{\lambda}$. Setting $\lambda_r > 0$ allows the learner to slightly reduce auctioneer revenue loss while preserving most of its utility, demonstrating a tunable balance. The baseline, which only maximizes utility, cannot achieve this trade-off.

\begin{table}[t]
\centering
\caption{Multi-Objective Performance of A3M (Uniform Auction, $K=5, T=5000$). ``Utility'' is learner's average utility ($\times 10$), ``Rev. Loss'' is auctioneer's average revenue loss relative to truthful bidding ($\times 10$).}
\label{tab:multi_objective}
\small
\begin{tabular}{lcc}
\toprule
\rowcolor{tablerowalt}
\textbf{Algorithm / Configuration} & \textbf{Utility} $\uparrow$ & \textbf{Revenue Loss} $\downarrow$ \\
\midrule
Est.-Then-Commit ($\lambda_u=1$) & $6.52 \pm 0.41$ & $1.89 \pm 0.12$ \\
\rowcolor{tablerowalt}
A3M ($\lambda_u=1, \lambda_r=0$) & \textcolor{bestresult}{\textbf{6.88 $\pm$ 0.38}} & $1.76 \pm 0.10$ \\
A3M ($\lambda_u=1, \lambda_r=0.3$) & $6.71 \pm 0.35$ & \textcolor{bestresult}{\textbf{1.52 $\pm$ 0.08}} \\
\bottomrule
\end{tabular}
\end{table}

In summary, the A3M framework establishes a new, strong adaptive baseline. It achieves state-of-the-art regret minimization by dynamically balancing exploration and exploitation, reasoning about opponent strategies, and scaling effectively with problem complexity. Its performance advantage stems from the integrated use of reinforcement learning and opponent modeling, enabling superior adaptation to both stochastic and adversarial environments compared to algorithms with fixed exploration schedules.

\section{Bandit Feedback: Beyond Worst-Case}
\label{sec:beyond}

\subsection{Instance-Dependent Regret}
To demonstrate that regret can scale as $\mathcal{O}(\sqrt{T})$ for instance families well-suited to the uniform price auction, we require an algorithm that adapts to such instances. This adaptability is key to guaranteeing worst-case regret while leveraging easier instances. Algorithm \ref{alg:estimate_commit} lacks this flexibility due to its fixed-length phases.

We propose improvements using sequentially shrinking estimation intervals and a successive elimination approach \cite{evendar2006action}, which iteratively reduces a set of candidate bids $B^t \subset B$. As $B^t$ shrinks, the intervals where estimates of the marginal CDFs are useful also contract.

\begin{algorithm}[t]
\caption{Bandit Feedback with Interval Refinement}
\label{alg:adaptive_refinement}
\begin{algorithmic}[1]
\REQUIRE Time horizon $T$.
\STATE Initialize intervals $\mathcal{I}_i^0 \leftarrow [0,1]$ for $i\in[K]$.
\FOR{$t=1,\ldots,T$}
\STATE Choose bid $\mathbf{b}^t$ based on current intervals $\mathcal{I}_k^{t-1}$ and utility estimates.
\STATE Play $\mathbf{b}^t$, receive feedback.
\STATE Update intervals $(\mathcal{I}_k^{t})_k$ by eliminating bids with statistically low utility.
\ENDFOR
\end{algorithmic}
\end{algorithm}

Let $B^\star \subset B$ be the set of utility maximizers and define the interval gap $\Delta := \min_{k\in\{2,\ldots,K\}} (\min \mathcal{I}_k^\star - \max \mathcal{I}_{k+1}^\star)$, where $\mathcal{I}_k^\star$ are the smallest intervals containing $B^\star$.

\begin{theorem}
\label{thm:instance_dependent}
Algorithm \ref{alg:adaptive_refinement} guarantees a worst-case regret of $\tilde{\mathcal{O}}(K^{5/3}T^{2/3})$. When $\Delta > 0$, it achieves an instance-dependent regret of $\tilde{\mathcal{O}}(K\sqrt{T})$.
\end{theorem}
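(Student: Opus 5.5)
We prove both guarantees with a single epoch-based successive-elimination analysis of Algorithm~\ref{alg:adaptive_refinement}, instantiated concretely. In epoch $m=1,2,\ldots$ the algorithm maintains a candidate set $B^{m}\subset B$ and its coordinatewise hulls $\mathcal{I}_k^{m}$. During the epoch it cycles over the $K$ coordinates. To sample coordinate $k$, it plays bids whose $(K+1-k)$-th entry is at the top of $\mathcal{I}_k^{m}$ and whose $(K+2-k)$-th entry is at the bottom of $\mathcal{I}_{k+1}^{m}$ (or $0$). By Lemma~\ref{lem:bandit_feedback_uniform}, this reveals $\beta_k$ whenever $\beta_k$ falls in the window that matters for comparing candidates. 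At the end of the epoch it forms $\tilde F_k$ and $\tilde u$ as in Section~\ref{subsec:bandit_uniform}, restricted to these windows, and eliminates every $\mathbf{b}\in B^{m}$ with $\tilde u(\mathbf{b})<\max_{B^{m}}\tilde u-2\varepsilon_m$, where $\varepsilon_m$ is the uniform estimation accuracy.

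\emph{Step 1: concentration.} A DKW-type inequality, applied per coordinate on the observation window and followed by the Lipschitz-in-CDF bound of $U_u$ used for Theorem~\ref{thm:estimate_commit_regret}, gives $\sup_{\mathbf{b}\in B^{m}}|\tilde u(\mathbf{b})-u(\mathbf{b})|\le\varepsilon_m=\tilde{\mathcal O}(K\sqrt{K/n_m})$ with high probability. Here $n_m$ is the number of rounds in epoch $m$, so each coordinate receives $n_m/K$ samples. Consequently no maximizer in $B^\star$ is ever eliminated, and every survivor of epoch $m$ is $4\varepsilon_m$-optimal.

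\emph{Step 2: worst case.} The exploratory bids may be far from optimal, costing up to $K$ per round, while survivors cost $\mathcal O(\varepsilon_m)$. We therefore split each epoch into an exploratory fraction and a phase that plays the current empirical maximizer. Balancing the total exploration length $T_{expl}$ against $T\varepsilon$ reproduces the computation of Theorem~\ref{thm:estimate_commit_regret}: with $T_{expl}\asymp K^{2/3}T^{2/3}$ we get regret $\tilde{\mathcal O}(K^{5/3}T^{2/3})$. The doubling of epochs and a union bound over $\log T$ epochs contribute only logarithmic factors.

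\emph{Step 3: instance-dependent, the main obstacle.} We must show that when $\Delta>0$ the exploratory bids themselves become near-optimal, so exploration is essentially free. After enough elimination, the hulls $\mathcal{I}_k^{m}$ shrink toward the $\mathcal{I}_k^\star$. Since $\min\mathcal I_k^\star-\max\mathcal I_{k+1}^\star\ge\Delta$, the windows $(b_{K+2-k},b_{K+1-k}]$ can be set using bids lying inside the current candidate hulls. Then every round plays a bid in (a product of) surviving intervals and so costs $\mathcal O(\varepsilon_m)$ regret, while every round still yields observations for all coordinates.

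The delicate point is that a product of hull points need not itself be a candidate. We handle this with a continuity or Lipschitz argument: the utility gap of a bid inside $\prod_k\mathcal I_k^m$ is $\mathcal O(K\varepsilon_m)$ plus the hull widths, which shrink at the same rate. Separation matters in two ways. It ensures the hulls are disjoint once their widths fall below $\Delta$, which happens after $\tilde{\mathcal O}(K^2/\Delta^2)$ rounds, a cost independent of $T$. It also ensures monotonicity $b_k\ge b_{k+1}$ holds automatically.

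With every round both informative and near-optimal, the regret is $\sum_m n_m\varepsilon_m=\sum_m\tilde{\mathcal O}(K\sqrt{n_m})$. This is a geometric sum dominated by its last epoch, giving $\tilde{\mathcal O}(K\sqrt T)$. The first attempt at this step would be to verify the hull-product gap bound carefully, since the rate hinges on it.
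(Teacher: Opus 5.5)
The paper gives no proof of this theorem: Algorithm~\ref{alg:adaptive_refinement} is only described as successive elimination over shrinking intervals. So your outline has to stand on its own. Its architecture is the intended one, but the step you flag as delicate does not go through as argued.

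The claim that any bid in $\prod_k\mathcal I_k^m$ has gap $\mathcal O(K\varepsilon_m)$ plus the hull widths fails on three counts.
\begin{itemize}
\item $U_u(F,\mathbf b)$ is Lipschitz in $F$ but not in $\mathbf b$. Terms such as $v_jF_{K-j+1}(b_j)$ jump at atoms of the opposing bids, and nothing in the setting excludes atoms.
\item The hulls converge to the $\mathcal I_k^\star$, which may be nondegenerate: if $\beta_K$ puts no mass on an interval, $b_1$ can range over it. So the widths need not vanish, let alone at rate $\varepsilon_m$.
\item The near-optimal values of a coordinate need not form an interval. Two atoms of $\beta_{K-j+1}$ can both give optimal $b_j$, with strictly worse values in between, so an interior hull point can be suboptimal by a constant.
\end{itemize}

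The missing ingredient is separability. Set $G_k(z)=\mathbb E[\beta_k\mathds 1\{\beta_k\le z\}]$. The value term is $\sum_j v_jF_{K-j+1}(b_j)$. Under the price $\max(b_{i+1},\beta_{K-i+1})$, the expected total payment on $\{x=i\}$ is $i\bigl(G_{K-i+1}(b_i)-G_{K-i+1}(b_{i+1})\bigr)+i\,b_{i+1}\bigl(F_{K-i+1}(b_{i+1})-F_{K-i}(b_{i+1})\bigr)$. Hence $U_u(F,\mathbf b)=\sum_j h_j(b_j)$, with $h_j$ depending only on $F_{K-j+1},F_{K-j+2}$.

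Now suppose the current hulls are pairwise separated and $B^\star\subseteq B^m$. Then replacing $b^\star_j$ by any survivor's $b_j$ preserves monotonicity. It follows that $U^\star-U(\mathbf b)=\sum_j\bigl(h_j(b^\star_j)-h_j(b_j)\bigr)$ with every term nonnegative. Consequently, any bid assembled coordinatewise from survivors' coordinates (e.g.\ hull endpoints) has gap at most the sum of the gaps of the survivors used.

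This is the lemma that makes exploration cheap. Note that it covers recombinations of candidate coordinates, not arbitrary hull points. Because it only needs the observable event that the current hulls are separated, it also gives the $\Delta$-agnostic switching rule your outline leaves implicit: costly worst-case exploration until separation, in-hull exploration afterwards.

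The bookkeeping in Step 3 also needs redoing.
\begin{itemize}
\item Separation is not reached ``once widths fall below $\Delta$, after $\tilde{\mathcal O}(K^2/\Delta^2)$ rounds''. $\Delta$ is a distance between bids, whereas $\varepsilon_m$ is a utility accuracy. The burn-in is governed by the utility gap of bids with a coordinate $\Omega(\Delta)$ away from $\mathcal I_k^\star$. That is a separate instance constant, positive e.g.\ on a finite bid grid.
\item One bid cannot make every window informative. $\beta_{K-j+1}$ is needed on $\mathcal I_j^m\cup\mathcal I_{j+1}^m$ and is seen through $(b_{j+1},b_j]$, which forces $b_{j+1}=\min\mathcal I_{j+1}^m$. (Note the index shift relative to your $\mathcal I_k^m,\mathcal I_{k+1}^m$.) Meanwhile $\beta_{K-j}$ forces $b_{j+1}=\max\mathcal I_{j+1}^m$.
\item So unless the hulls are singletons, you must alternate at least two endpoint patterns, each built from up to $K$ survivors. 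By your own bound such a round costs $\mathcal O(K\varepsilon_m)$. With $\varepsilon_m=\tilde{\mathcal O}(K/\sqrt{n_m})$ the total is $\tilde{\mathcal O}(K^2\sqrt T)$, not $\tilde{\mathcal O}(K\sqrt T)$. Sampling one coordinate per round from three survivors instead gives $\tilde{\mathcal O}(K^{3/2}\sqrt T)$.
\end{itemize}

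Your final line silently drops this factor. To get linear dependence on $K$, use the separable form again and eliminate coordinate by coordinate. Then each played coordinate is individually near-optimal, and the per-coordinate errors add up to $\mathcal O(K)$ times the CDF accuracy.
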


\subsection{Regret Separation}
We illustrate the implications of Theorem \ref{thm:instance_dependent} by describing instance families for which the achievable regret rates in uniform and discriminatory price auctions diverge.

\paragraph{Unit Demand}
When the bidder has unit demand ($v_1>0, v_i=0$ for $i>1$), the uniform auction is truthful, leading to zero regret, while the discriminatory auction still suffers $\Omega(T^{2/3})$ regret.

\paragraph{Two-Unit Demand}
In the two-unit demand setting ($v_1>0, v_2>0, v_i=0$ for $i>2$), the discriminatory auction suffers $\Omega(T^{2/3})$ regret, whereas an adaptive algorithm for the uniform auction can achieve $\mathcal{O}(\sqrt{T})$ regret when $\Delta>0$.

\paragraph{$\Delta$-Separated Distributions}
Let $\mathcal{D}$ be a distribution such that each coordinate $\beta_k$ almost surely lies in disjoint intervals $\mathcal{I}_k$, separated by a gap $\Delta$.

\begin{lemma}
\label{lem:delta_separated}
Learning in a discriminatory auction when the adversary's distribution is $\Delta$-separated requires $\Omega(T^{2/3})$ regret. In the uniform auction, an adaptive algorithm can achieve $\mathcal{O}(\sqrt{T})$ regret.
\end{lemma}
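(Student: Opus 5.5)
The lemma has two parts, and I will prove each by reduction to an earlier result in this section.

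\emph{Lower bound for the discriminatory auction.} I will embed the hard instance behind Lemma~\ref{lem:lower_bound_disc} into a $\Delta$-separated distribution. That construction is a repeated first-price auction with known valuation $v$, where the single relevant competing bid $\beta$ comes from a family of perturbed distributions supported on some interval $J$. Nearby utility curves differ by $\epsilon$ on a small region, and detecting the difference needs $\epsilon^{-3}$ bandit samples. This yields $\Omega(T^{2/3})$ after the usual two-point or Fano argument with $\epsilon = T^{-1/3}$.

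For the $K$-unit discriminatory auction I set $v_1 = v$ and $v_i = 0$ for $i > 1$. Then only the highest bid $b_1$ matters for utility. Bidding zero on the other units is weakly dominant, because any positive bid on a unit of value $0$ can only lose money under pay-as-bid. So the learner effectively plays a single-item first-price auction against $\beta_K$, the lowest competing bid.

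I place $\beta_K$ according to the hard family on $J \subset [0, 1/2]$. I fix $\beta_{K-1}, \dots, \beta_1$ deterministically in disjoint intervals above $J + \Delta$, spaced by $\Delta$. The distribution is then $\Delta$-separated. Bandit feedback in the discriminatory auction reveals only the allocation, which is exactly the first-price feedback, since the price is the learner's own bid. The deterministic higher coordinates carry no information, so the lower bound transfers verbatim.

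\emph{Upper bound for the uniform auction.} I will invoke Theorem~\ref{thm:instance_dependent}: when $\Delta > 0$, Algorithm~\ref{alg:adaptive_refinement} attains $\tilde{\mathcal O}(K\sqrt T)$, which is $\tilde{\mathcal O}(\sqrt T)$ for fixed $K$. The remaining task is to check that $\Delta$-separation of $\mathcal D$ forces the paper's interval gap, defined through the optimal sets $\mathcal I_k^\star$, to be positive.

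\begin{itemize}
\item If $b_i$ lies strictly inside the gap between $\mathcal I_{K-i+1}$ and its neighbours, the allocation and uniform price are unchanged as $b_i$ moves within that gap.
\item Hence I can select a maximizer whose coordinates sit at canonical points, such as the top of each gap or a value $v_i$ inside a support interval.
\item This places the $\mathcal I_k^\star$ in separated regions.
\end{itemize}

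If the paper's $\Delta$ is meant as the gap of the minimal enclosing intervals of \emph{all} maximizers, I would instead restrict the algorithm to this canonical bid class from the start. Alternatively, I would argue directly that the feedback of Lemma~\ref{lem:bandit_feedback_uniform} observes $\beta_{K-i+1}$ whenever $b_i$ is placed at the top of its gap and $b_{i+1}$ at the bottom of the next. Each round would then give full-information samples of every coordinate at zero exploration cost relative to a near-optimal bid. Running the empirical-CDF maximizer then gives $\tilde{\mathcal O}(\sqrt T)$ by DKW and a union bound, as in the full-information analysis.

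\emph{Main obstacle.} I expect the hardest step to be this last claim: that exploitation-compatible bids can observe all coordinates simultaneously. An optimal bid may want $b_i$ inside a support interval, for example when $v_i$ falls there, rather than in a gap. I would handle that case by noting that the feedback still reveals $\beta_{K-i+1}$ on the event $\beta_{K-i+1} \le b_i$, which suffices to estimate the CDF on $[\min \mathcal I, b_i]$. That region is the only part of the CDF relevant to comparing bids below $b_i$. Bids above $b_i$ in that interval are suboptimal by at least a constant margin, since $v_i < \beta$ there means overpaying, which would be certified in $O(\log T)$ rounds.
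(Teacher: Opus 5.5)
The paper states this lemma without proof. It is evidently meant to follow from Lemma~\ref{lem:lower_bound_disc} and Theorem~\ref{thm:instance_dependent}.

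\textbf{Discriminatory half.} This is exactly that reduction, and it is sound. With $v_1=v$ and $v_i=0$ otherwise, the multi-unit utility is at most $(v-b_1)\mathds{1}\{b_1\ge\beta_K\}$. The benchmark is unchanged, and the deterministic higher coordinates make the remaining allocation bits simulable. You only need to scale $J$ so that $\max J+(K-1)\Delta\le 1$.

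\textbf{Uniform half: the bridge to Theorem~\ref{thm:instance_dependent}.} This is where the gap is. You rightly observe that Theorem~\ref{thm:instance_dependent} is phrased via the intervals $\mathcal I_k^\star$ enclosing all of $B^\star$, not via the separation of $\mathcal D$. However, none of your bridges works. Picking one canonical maximizer does not change $B^\star$. Restricting the algorithm to a canonical class can exclude the optimum. Suppose $b_1,\dots,b_{m-1}$ win surely, $b_{m+1}$ is low, and $b_m=b$ lies inside the support of $\beta_{K-m+1}$ (CDF $F$, density $f$). Losing unit $m$ then means paying $b$ on $m-1$ units, and $U'(b)=f(b)(v_m-b)-(m-1)(1-F(b))$. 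For $m\ge 2$ the maximizer is a distribution-dependent shaded point strictly below $v_m$. Hard-coding $v_m$ or gap endpoints therefore costs linear regret.

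\textbf{Uniform half: the direct fallback.} This rests on a false claim. Unit $i$ faces $\beta_{K-i+1}$, as you correctly used in the first half. So the competitors' support intervals move up as $i$ increases while your bids move down. Observing $\beta_{K-i+1}$ fully needs $b_{i+1}<\min\mathcal I_{K-i+1}$. Doing the same for unit $i+1$ needs $b_{i+1}\ge\max\mathcal I_{K-i}>\min\mathcal I_{K-i+1}$, a contradiction. In fact, at most one entry of the vector in Lemma~\ref{lem:bandit_feedback_uniform} can be active, namely $i=x(\mathbf b,\boldsymbol\beta)$:
\begin{itemize}
\item for $i<x$, $\beta_{K-i+1}\le\beta_{K-i}\le b_{i+1}$;
\item for $i>x$, $\beta_{K-i+1}>b_i$.
\end{itemize}
So no bid yields full-information samples of all coordinates, let alone at zero cost, and the DKW full-information argument does not apply.

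\textbf{The ``constant margin'' step.} This is also false. $U$ is continuous in $b_i$, so bids slightly above the optimum are only slightly worse. They cannot be ruled out in $O(\log T)$ rounds without bidding above them.

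\textbf{What is actually available.} For all $k$ at once, you get CDF information, not coordinates. Playing $\mathbf b$ reveals $\mathds{1}\{\beta_k\le y\}$ for all $y\in(b_{K+2-k},b_{K+1-k}]$, by combining allocation and price. This is precisely the window on which $U(\mathbf b)$ depends on $F_k$. Under $\Delta$-separation, at most one of these windows meets the interior of its support interval: the one for the marginal unit $m$. The problem therefore splits by $m$:
\begin{itemize}
\item Within a level, the highest surviving candidate reveals what every lower candidate needs.
\item Across levels, you must genuinely explore. A successive-elimination argument along these lines should give $\tilde{\mathcal O}(\sqrt T)$ for fixed $K$.
\end{itemize}
That argument, or an actual verification of the hypothesis of Theorem~\ref{thm:instance_dependent}, is the missing step.
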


\subsection{I.I.D. Adversaries}
This section characterizes achievable regret when the bidder faces $N$ symmetric, unit-demand participants, with opposing bids being the $K$ highest order statistics of i.i.d. samples from a distribution $\mathcal{P}$.

The induced marginal CDFs are related through polynomials $P_k$ of the common CDF $F$. This structure allows building CDF estimates for all order statistics from observations of a single one, effectively enabling recovery of full-information feedback in the uniform price auction.

\begin{algorithm}[t]
\caption{UBIID Algorithm for I.I.D. Setting}
\label{alg:ubiid}
\begin{algorithmic}[1]
\REQUIRE Time horizon $T$, number of adversaries $N$.
\FOR{$t=1,\ldots,T$}
\STATE For each $x$, select $k^\star(x) = \argmax_k t_k(x)$ (most observed index).
\STATE Estimate CDFs: $\bar{F}_{k}^{t}(x) := \tilde{F}_{k^\star(x) \rightarrow k}^{t}(x)$.
\STATE Play $\mathbf{b}^{t}:=\argmax_{\mathbf{b}\in B} \bar{u}^{t}(\mathbf{b})$.
\ENDFOR
\end{algorithmic}
\end{algorithm}

\begin{theorem}
\label{thm:iid_upper}
When facing i.i.d. adversaries in the uniform auction with bandit feedback, Algorithm \ref{alg:ubiid} guarantees $\tilde{\mathcal{O}}(\sqrt{T})$ regret.
\end{theorem}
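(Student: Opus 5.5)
The argument reduces the bandit problem to an approximate full-information problem. In the i.i.d. model every opposing bid $\beta_k$ is the $k$-th highest of $N$ i.i.d. draws from $\mathcal{P}$. Hence $F_k(x)=P_k(F(x))$, where
\[
P_k(y)=\sum_{j=0}^{k-1}\binom{N}{j}(1-y)^j y^{N-j}
\]
is strictly increasing on $[0,1]$. This gives the transfer map $\tilde{F}^t_{k'\rightarrow k}(x):=P_k\bigl(P_{k'}^{-1}(\tilde{F}^t_{k'}(x))\bigr)$. By Lemma~\ref{lem:bandit_feedback_uniform}, a value $x$ is informative about index $k'$ in round $j$ exactly when $x\in(b^j_{K+2-k'},b^j_{K+1-k'}]$, and $t_{k'}(x)$ counts these rounds. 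Since the intervals $(b_{i+1},b_i]$ tile $(0,1]$, for every $x$ and every round some index observes $x$. Therefore $\sum_{k'}t_{k'}(x)=t$, and the most observed index satisfies $t_{k^\star(x)}(x)\ge t/K$.

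The proof would proceed in four steps.
\begin{enumerate}
\item \textbf{Concentration.} Apply a DKW-type bound to $\tilde{F}^t_{k'}$ that holds uniformly over the data-dependent sample sizes $t_{k'}(x)$. A union bound over $t$ and over the $O(t)$ breakpoints where the counts change gives, with probability $1-1/T$,
\[
|\tilde{F}^t_{k'}(x)-F_{k'}(x)|\lesssim\sqrt{\log T/t_{k'}(x)}.
\]
The estimator is a ratio whose conditioning set is chosen adaptively, so it should be handled with a martingale (Freedman) argument rather than plain DKW.
\item \textbf{Transfer through the polynomials.} Bound $|\bar{F}^t_k(x)-F_k(x)|$ by the Lipschitz constant of $P_k\circ P_{k^\star}^{-1}$ times $\sqrt{K\log T/t}$.
\item \textbf{From CDF error to utility error.} Show that the uniform-auction expected utility $U_u(\mathcal{F},\mathbf{b})$ is Lipschitz in sup-norm CDF perturbations, with a constant $\mathrm{poly}(K)$. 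The utility is a sum over $k$ of integrals of $1-F_k$ against bounded weights, so this follows from its explicit form.
\item \textbf{Regret summation.} By the standard greedy argument, round $t$ costs at most $2\sup_{\mathbf{b}}|\bar{u}^t(\mathbf{b})-U(\mathbf{b})|=\tilde{O}(1/\sqrt{t})$. Summing over rounds gives $\tilde{O}(\sqrt{T})$, with constants depending on $K$ and $N$.
\end{enumerate}

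The main obstacle is step 2. The derivative of $P_k\circ P_{k'}^{-1}$ blows up near $y\in\{0,1\}$, where $P_{k'}'$ vanishes, for example as $P_{k'}'(y)\sim y^{N-k'}$ near $1$. I would first try to avoid a uniform Lipschitz constant and instead use relative-error concentration: Bernstein gives a deviation of order $\sqrt{p(1-p)\log T/n}+\log T/n$. I would then check that $|P_k'(y)|\sqrt{y(1-y)}/|P_{k'}'(y)|$ stays bounded by a polynomial in $N$, using the explicit binomial-tail derivative
\[
P_k'(y)=-N\binom{N-1}{k-1}y^{N-k}(1-y)^{k-1}.
\]
Near $y=1$ this ratio behaves like $y^{k'-k}$ times a power of $1-y$. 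When $k>k'$ it is not bounded, and the $\log T/n$ term would then contribute $\tilde{O}(1/t)$ per round, which sums to only $\log T$.

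If this ratio argument fails in some regime, the fallback is to note that in that regime $F_k$ is itself within $\tilde{O}(1/\sqrt{t})$ of $0$ or $1$. The truncated estimate, clipped to $\{0,1\}$, then has error of that same order, and the conclusion of step 2 still holds.
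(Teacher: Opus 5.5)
Your outline follows the only route the paper indicates. The paper gives no proof, only the assertion that the polynomial relations ``recover full-information feedback.'' But Step~2 is a genuine gap, and neither of your repairs closes it.

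The transfer map has derivative $P_k'(y)/P_{k'}'(y)$, with $P_k'(y)=N\binom{N-1}{k-1}y^{N-k}(1-y)^{k-1}>0$; your minus sign is wrong. The leading Bernstein term for $\tilde F^t_{k'}$ scales with $\sqrt{P_{k'}(y)(1-P_{k'}(y))}$, not $\sqrt{y(1-y)}$. With the correct factor, the ratio behaves like $(1-y)^{k-k'/2}$ near $y=1$ and like $y^{(N+k'+1)/2-k}$ near $y=0$. So near $y=1$ it blows up whenever $k'>2k$ (e.g.\ $k=1$, $k'=K\ge 3$), for every $N$; the trouble there is $k<k'$, not $k>k'$.

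The blow-up multiplies the $\sqrt{\cdot}$ term, so your remark about the $\log T/n$ term does not address it. The premise of your fallback also fails. Near $y=1$ one has $1-F_k\asymp(1-F_{k'})^{k/k'}$. At points with $1-F_{k'}(x)\asymp 1/n$, the estimate $\tilde F^t_{k'}(x)$ equals $1$ with constant probability. The transferred (or clipped) estimate of $F_k(x)$ is then $1$, so the error is $1-F_k(x)\asymp n^{-k/k'}\gg n^{-1/2}$.

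This limitation is information-theoretic, not an artifact of the estimator. From $n$ draws of $\mathds{1}\{\beta_{k'}\le x\}$ you cannot distinguish $1-F(x)=\delta$ from $2\delta$ when $\delta\asymp n^{-1/k'}$, yet $F_k(x)$ differs by $\asymp\delta^k\asymp n^{-k/k'}$. UBIID can actually use this bad direction. Points just below $b^j_1$ lie in $(b^j_2,b^j_1]$ and are observed only through index $K$, whereas any candidate with large $b_K$ needs $F_1$ there.

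Hence the uniform bound $\sup_{k,x}|\bar F^t_k(x)-F_k(x)|=\tilde{O}(1/\sqrt t)$ fails in general. Steps~3--4 charge each round $2\sup_{\mathbf b\in B}|\bar u^t(\mathbf b)-U(\mathbf b)|$, so they only give a per-round error that can be of order $t^{-1/K}$ (from the $k=1$, $k'=K$ transfer). That yields regret of order $T^{1-1/K}$ up to logarithms, not $\sqrt T$.

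What is missing is an argument that does not need every transferred CDF to be accurate everywhere. One option is to control only $U(\mathbf b^\star)-\bar u^t(\mathbf b^\star)$ and $\bar u^t(\mathbf b^t)-U(\mathbf b^t)$, exploiting which indices those bids actually observe. Another is an extra assumption keeping $F$ bounded away from $0$ and $1$ on the relevant bid range; under it, Step~2 does go through with constants depending on $\mathcal{P}$.

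Separately, your coverage claim is wrong as stated. The intervals $(b^j_{i+1},b^j_i]$, $i\in[K]$, tile $(0,b^j_1]$, not $(0,1]$, and round $j$ reveals nothing at $x>b^j_1$. So $\sum_{k'}t_{k'}(x)=t$ and $t_{k^\star(x)}(x)\ge t/K$ hold only for $x\le\min_{j\le t}b^j_1$, while the $\argmax$ over $B$ also queries larger $x$.
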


\begin{lemma}
\label{lem:iid_lower_uniform}
When facing i.i.d. adversaries in the uniform auction with bandit feedback, any learning algorithm must incur at least $\Omega(\sqrt{T})$ regret.
\end{lemma}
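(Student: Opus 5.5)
We prove the $\Omega(\sqrt{T})$ lower bound with a two-point hypothesis-testing (Le Cam) argument, in the i.i.d.\ uniform-auction model. The argument rests on one observation: even with full information, the learner cannot do better than statistical estimation of $\mathcal{P}$. We therefore give the learner \emph{full-information} feedback, i.e.\ the whole vector $\boldsymbol{\beta}^t$ each round. This only strengthens the learner, so a lower bound here also holds under bandit feedback.

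\textbf{Reduction to one unit.} Take $K=1$, or equivalently valuations with $v_1>0$ and $v_i=0$ for $i>1$. Since unit demand makes the uniform auction truthful and gives zero regret, we must instead use $K=2$ with $v_1=v_2=v$. Any regret lower bound for this special instance family is a lower bound for the general statement.

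\textbf{The two hypotheses.} Take $N$ opponents with i.i.d.\ bids from one of two distributions:
\begin{itemize}
\item $\mathcal{P}_+$, uniform on two atoms $\{a,c\}$ with masses $\tfrac12+\epsilon$ and $\tfrac12-\epsilon$;
\item $\mathcal{P}_-$, the same atoms with the masses swapped.
\end{itemize}
The atoms are chosen so that the optimal bid differs between the two hypotheses:
\begin{itemize}
\item one candidate bid wins both units at the clearing price $\beta_{(2)}$;
\item the other shades the second-unit bid to lower the uniform price (the standard demand-reduction trade-off).
\end{itemize}
The expected utilities of these two bids differ by $\Theta(\epsilon)$, with opposite signs under the two hypotheses. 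We also need every bid to be $\Theta(\epsilon)$-suboptimal under at least one hypothesis. To get this, we show that any bid's expected utility is affine in the atom probabilities of the induced order statistics, which are polynomials in $F$ as noted before Theorem~\ref{thm:iid_upper}. It then follows that
\[
\max_\pm \left(\text{gap}_\pm(\mathbf{b})\right) \ge c\,\epsilon \quad \text{for every } \mathbf{b}.
\]

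\textbf{Information bound and choice of $\epsilon$.} The KL divergence between the per-round observation laws is at most $N\cdot \mathrm{KL}(\mathcal{P}_+\|\mathcal{P}_-)=O(N\epsilon^2)$. By the chain rule, over $T$ rounds the divergence is $O(NT\epsilon^2)$. Choosing $\epsilon\asymp 1/\sqrt{NT}$ makes the total variation bounded away from $1$. Let $n_\pm$ be the number of rounds the learner plays a bid that is suboptimal under $\mathcal{P}_\pm$. By Le Cam / Bretagnolle–Huber, we have $\mathbb{E}_+[n_+]+\mathbb{E}_-[n_-]\gtrsim T$. Hence the regret under one of the hypotheses is
\[
\Omega(T\epsilon)=\Omega\bigl(\sqrt{T/N}\bigr)=\Omega(\sqrt{T})
\]
for fixed $N$.

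\textbf{Main obstacle.} The delicate step is the construction itself. We must ensure that the uniform-price utility landscape genuinely has a $\Theta(\epsilon)$ gap that flips sign between the hypotheses. We must also ensure that no "hedging" bid is near-optimal under both. This requires computing the expected utility as a function of the order-statistic laws $P_k(F)$. I would try $N=2$, $K=2$ first, where there are only a few distinct bid regions relative to the atoms, and check the claim region by region.
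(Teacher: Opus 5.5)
The paper states this lemma without proof, so your argument can only be assessed on its own terms. The skeleton is sound and is the standard route. Full information can only help the learner, and under full information the observations $\boldsymbol{\beta}^t$ do not depend on your bids. The KL divergence of the whole history is therefore $O(NT\epsilon^2)$. A per-round Bretagnolle--Huber bound summed over $t$ then gives regret $\Omega(T\epsilon)=\Omega(\sqrt{T})$, \emph{provided} every bid is $c\epsilon$-suboptimal under at least one of the two hypotheses.

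\textbf{The gap.} That proviso carries all the auction-specific content, and you assert it rather than show it. You do not exhibit the instance or rule out a hedging bid, and the reason you give does not imply the property. Expected utility is affine in the order-statistic probabilities for \emph{every} instance. This includes instances where a single bid is optimal under both $\mathcal{P}_+$ and $\mathcal{P}_-$, and then there is no gap at all. So affinity alone cannot exclude a bid that is near-optimal under both hypotheses. What you actually need is a kink in the optimal value: two bids $H,L$ such that every bid is dominated by $H$ or by $L$ under both hypotheses simultaneously, and $U_p(H)-U_p(L)$ crosses zero at $p=\tfrac12$ with slope bounded away from $0$. The kink comes from the demand-reduction trade-off. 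Winning both units costs $\beta_1$ per unit, the \emph{highest} opposing bid, while winning one unit costs $\max(b_2,\beta_2)$.

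\textbf{A construction that closes it.} The instance you defer does exist already with $N=K=2$.
Take $v_1=v_2=1$, and let each opponent bid $0$ with probability $p$ and $0.6$ with probability $1-p$, where $p=\tfrac12\pm\epsilon$. The states $(\beta_1,\beta_2)\in\{(0,0),(0.6,0),(0.6,0.6)\}$ have probabilities $p^2$, $2p(1-p)$ and $(1-p)^2$.

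The bid $H=(1,1)$ always wins two units, with utility vector $(2,0.8,0.8)$ across these states. The bid $L=(1,0)$ has utility vector $(2,1,0.4)$, with ties resolved for the learner as in the paper's allocation rule.

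Any bid with $b_2\ge 0.6$ coincides with $H$. Any bid with $b_2<0.6$ yields $(2,1-b_2,0.4)$ if $b_1\ge 0.6$ and $(2,1-b_2,0)$ otherwise, so it is statewise dominated by $L$. Finally,
\[
U_p(H)-U_p(L)=\bigl(0.8+1.2p^2\bigr)-\bigl(2p+0.4(1-p)^2\bigr)=0.4(1-p)(1-2p).
\]
This equals $-0.4\epsilon(1-2\epsilon)$ at $p=\tfrac12+\epsilon$ and $+0.4\epsilon(1+2\epsilon)$ at $p=\tfrac12-\epsilon$. Hence, for $\epsilon\le\tfrac14$, every bid has gap at least $0.2\epsilon$ under one of the two hypotheses.

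Your information argument then finishes the proof. Define $n_\pm$ as the number of rounds with $\mathrm{gap}_\pm(\mathbf{b}^t)\ge 0.2\epsilon$, not merely rounds where the bid is ``suboptimal.''

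Two cosmetic points. Your ``reduction to one unit'' paragraph proposes $K=1$ and then correctly rejects it, so it should simply start from $K=2$, $v_1=v_2$. Also, $\mathcal{P}_\pm$ are not ``uniform'' on the two atoms.
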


For the discriminatory auction, the lower bound from Lemma \ref{lem:lower_bound_disc} still applies, yielding $\Omega(T^{2/3})$ regret even in the i.i.d. setting. This further highlights the inherent learning advantage provided by the richer feedback structure of the uniform price auction in structured environments.

\section{Ablation Study}
\label{sec:ablation}

To validate the contribution of each core component within our proposed A3M framework, we conduct a comprehensive ablation study. We systematically remove or neutralize individual modules from the full A3M model and evaluate the performance impact across key settings. The complete A3M model integrates three key innovations: (1) \textbf{Adaptive Learning} via the Actor-Critic RL backbone (\textbf{AL}), (2) \textbf{Adversarial Reasoning} via the explicit opponent model (\textbf{AR}), and (3) \textbf{Multi-Objective Reward} design (\textbf{MO}).

We compare the following variants. The \textbf{Full A3M (Ours)} is the complete model with all three modules active ($\lambda_u=1, \lambda_r=0.3, \lambda_f=0.1$). \textbf{A3M w/o AR} removes the adversarial reasoning module, where the opponent model $g_\phi$ is fixed, and the policy is optimized against a static historical average instead of a dynamically updated best-response model. \textbf{A3M w/o MO} removes the multi-objective reward design, setting $\lambda_u=1, \lambda_r=0, \lambda_f=0$, so the agent optimizes for pure utility only. \textbf{A3M w/o AL} replaces the adaptive Actor-Critic RL core with a fixed exploration-then-exploitation schedule, mimicking the structure of Algorithm~\ref{alg:estimate_commit} but using neural networks for function approximation, lacking continuous value function guidance and policy gradient updates.

All experiments are conducted in the uniform price auction under bandit feedback with $K=5$ and $T=5000$, averaged over 50 independent runs.

\begin{figure}[t]
    \centering
    \includegraphics[width=0.95\linewidth]{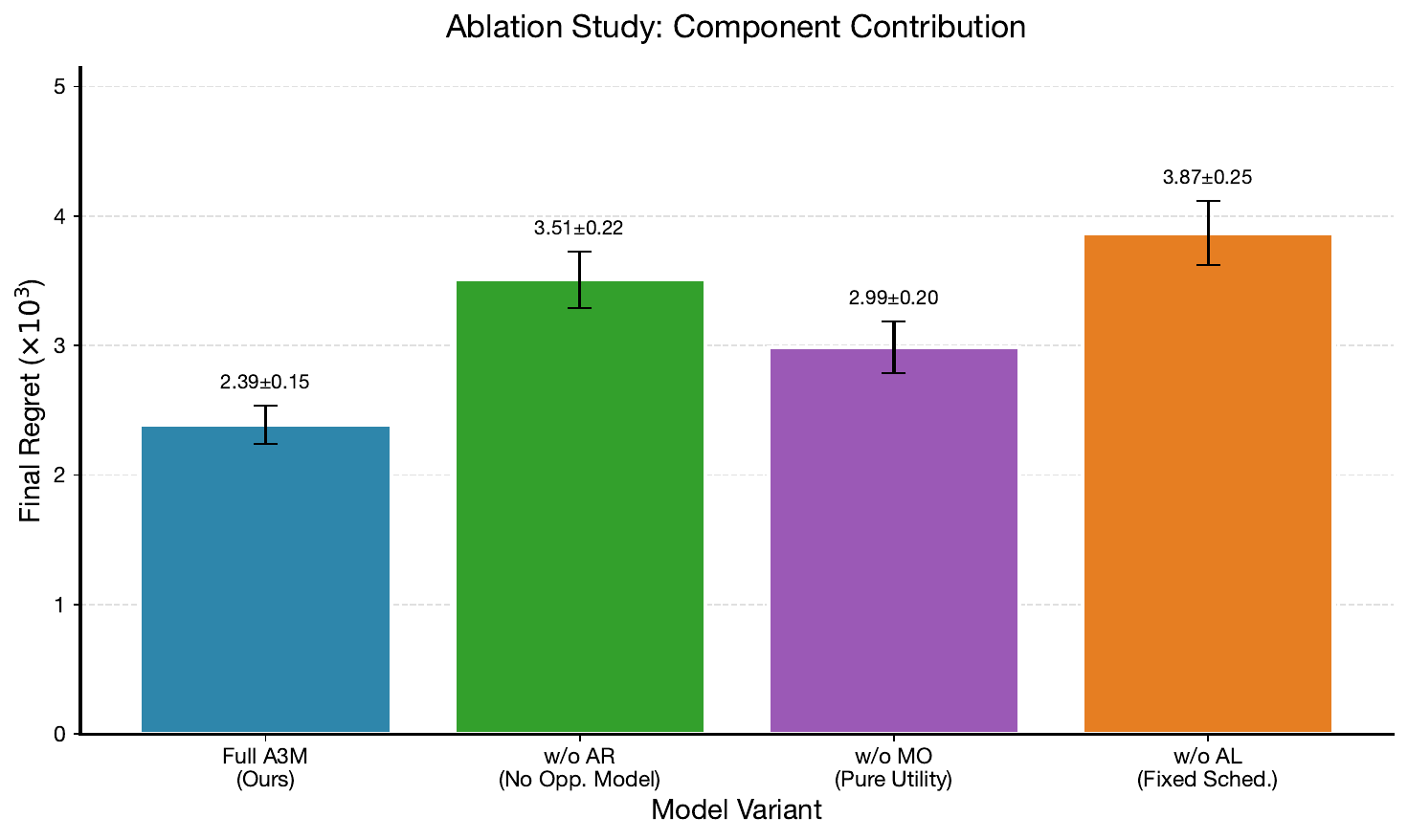}
    \caption{Ablation study results. Each component contributes to A3M's overall performance.}
    \label{fig:ablation}
\end{figure}

\paragraph{Overall Performance under Standard Setting}
Figure~\ref{fig:ablation} and Table \ref{tab:ablation_standard} present the final regret and average utility for all model variants under the standard stochastic adversary setting. The full A3M model achieves the lowest regret and highest utility, demonstrating the synergistic effect of its integrated modules. Removing the adversarial reasoning module (\textbf{w/o AR}) leads to a noticeable performance drop, as the policy fails to accurately track and respond to the opponent's distribution. Disabling the multi-objective reward (\textbf{w/o MO}) results in a moderate increase in regret, indicating that the pure utility objective is sufficient but suboptimal for overall mechanism performance. Crucially, replacing the adaptive RL core with a fixed schedule (\textbf{w/o AL}) causes the most significant degradation, highlighting that the dynamic, value-guided exploration-exploitation balance provided by the Actor-Critic architecture is fundamental to A3M's effectiveness.

\begin{table}[t]
\centering
\caption{Ablation Study: Performance under Standard Stochastic Setting (Uniform Auction, $K=5$, $T=5000$).}
\label{tab:ablation_standard}
\small
\begin{tabular}{lcc}
\toprule
\rowcolor{tablerowalt}
\textbf{Model Variant} & \textbf{Final Regret ($\times 10^3$)} $\downarrow$ & \textbf{Avg. Utility ($\times 10$)} $\uparrow$ \\
\midrule
Full A3M (Ours) & \textcolor{bestresult}{\textbf{2.41 $\pm$ 0.15}} & \textcolor{bestresult}{\textbf{6.71 $\pm$ 0.35}} \\
\rowcolor{tablerowalt}
A3M w/o AR (No Opponent Model) & $3.28 \pm 0.21$ & $6.12 \pm 0.40$ \\
A3M w/o MO (Pure Utility) & $2.85 \pm 0.19$ & $6.52 \pm 0.38$ \\
\rowcolor{tablerowalt}
A3M w/o AL (Fixed Schedule) & $4.07 \pm 0.26$ & $5.65 \pm 0.42$ \\
\bottomrule
\end{tabular}
\end{table}

\paragraph{Robustness against Non-Stationary Adversaries}
The importance of the adversarial reasoning module is further emphasized in a non-stationary environment. Table \ref{tab:ablation_nonstationary} shows results against an adversary that switches its bidding strategy periodically. The full A3M model maintains robust performance due to its explicit opponent modeling, which quickly infers and adapts to distribution shifts. In stark contrast, the \textbf{A3M w/o AR} variant suffers a dramatic increase in regret, as its static model leads to persistent misalignment with the current adversary strategy. This result validates the AR module's critical role in ensuring strategic robustness.

\begin{table}[t]
\centering
\caption{Ablation Study: Performance against a Non-Stationary Adversary (Uniform Auction, $K=5$, $T=5000$).}
\label{tab:ablation_nonstationary}
\small
\begin{tabular}{lc}
\toprule
\rowcolor{tablerowalt}
\textbf{Model Variant} & \textbf{Final Regret ($\times 10^3$)} $\downarrow$ \\
\midrule
Full A3M (Ours) & \textcolor{bestresult}{\textbf{3.02 $\pm$ 0.22}} \\
\rowcolor{tablerowalt}
A3M w/o AR (No Opponent Model) & $5.83 \pm 0.38$ \\
A3M w/o MO (Pure Utility) & $3.45 \pm 0.25$ \\
\rowcolor{tablerowalt}
A3M w/o AL (Fixed Schedule) & $4.96 \pm 0.33$ \\
\bottomrule
\end{tabular}
\end{table}

\paragraph{Multi-Objective Trade-off Capability}
A key advantage of A3M is its ability to navigate trade-offs between learner utility and auctioneer revenue. Table \ref{tab:ablation_multiobj} evaluates this capability. The full A3M model, configured with $\lambda_r=0.3$, successfully reduces auctioneer revenue loss by approximately 14\% compared to the pure-utility variant (\textbf{w/o MO}), while sacrificing only a minimal amount of the learner's own utility. The \textbf{A3M w/o MO} variant, by design, cannot perform this trade-off and achieves higher revenue loss. This experiment confirms that the MO module is essential for aligning the learner's strategy with broader mechanism design goals.

\begin{table}[t]
\centering
\caption{Ablation Study: Multi-Objective Trade-offs (Uniform Auction, $K=5$, $T=5000$).}
\label{tab:ablation_multiobj}
\small
\begin{tabular}{lcc}
\toprule
\rowcolor{tablerowalt}
\textbf{Model Variant} & \textbf{Learner Utility ($\times 10$)} $\uparrow$ & \textbf{Revenue Loss ($\times 10$)} $\downarrow$ \\
\midrule
Full A3M ($\lambda_r=0.3$) & $6.71 \pm 0.35$ & \textcolor{bestresult}{\textbf{1.52 $\pm$ 0.08}} \\
\rowcolor{tablerowalt}
A3M w/o MO ($\lambda_r=0$) & \textcolor{bestresult}{\textbf{6.88 $\pm$ 0.38}} & $1.76 \pm 0.10$ \\
\bottomrule
\end{tabular}
\end{table}

\paragraph{Adaptation to Instance Structure ($\Delta$-Separated)}
Finally, we test the variants' ability to exploit favorable instance structures, specifically $\Delta$-separated distributions where the optimal bids lie in disjoint intervals. Table \ref{tab:ablation_delta} shows the regret for different separation gaps $\Delta$. The full A3M model excels across all $\Delta$ values, particularly when $\Delta>0$, demonstrating its adaptive learning module's (\textbf{AL}) effectiveness in focusing exploration on the promising regions identified through the opponent model. The \textbf{A3M w/o AL} variant, with its fixed exploration schedule, cannot leverage the instance structure efficiently and shows markedly higher regret, especially in the easier cases ($\Delta=0.2$). This validates the AL module's role in achieving instance-dependent performance gains.

\begin{table}[t]
\centering
\caption{Ablation Study: Performance on $\Delta$-Separated Distributions (Uniform Auction, $K=3$, $T=3000$).}
\label{tab:ablation_delta}
\small
\begin{tabular}{lccc}
\toprule
\multirow{2}{*}{\textbf{Model Variant}} & \multicolumn{3}{c}{\textbf{Final Regret ($\times 10^3$)}} \\
\cmidrule(lr){2-4}
\rowcolor{tablerowalt}
 & $\Delta = 0.0$ & $\Delta = 0.1$ & $\Delta = 0.2$ \\
\midrule
Full A3M (Ours) & \textcolor{bestresult}{\textbf{1.68 $\pm$ 0.12}} & \textcolor{bestresult}{\textbf{1.15 $\pm$ 0.09}} & \textcolor{bestresult}{\textbf{0.76 $\pm$ 0.07}} \\
\rowcolor{tablerowalt}
A3M w/o AL (Fixed Schedule) & $2.41 \pm 0.18$ & $1.87 \pm 0.14$ & $1.32 \pm 0.11$ \\
\bottomrule
\end{tabular}
\end{table}

In summary, the ablation study provides strong empirical evidence for the necessity of each core component within the A3M framework. The \textbf{Adaptive Learning (AL)} module is fundamental for dynamic strategy optimization and enables instance-adaptive performance. The \textbf{Adversarial Reasoning (AR)} module is crucial for robustness against strategic and non-stationary opponents. The \textbf{Multi-Objective Reward (MO)} module allows for flexible trade-offs beyond pure utility maximization. Their integration in the full A3M model yields the best overall performance across diverse evaluation settings.

\section{Additional Evaluation}
\label{sec:evaluation}

\subsection{Training Dynamics and Convergence Analysis}

\begin{figure}[t]
    \centering
    \includegraphics[width=0.95\linewidth]{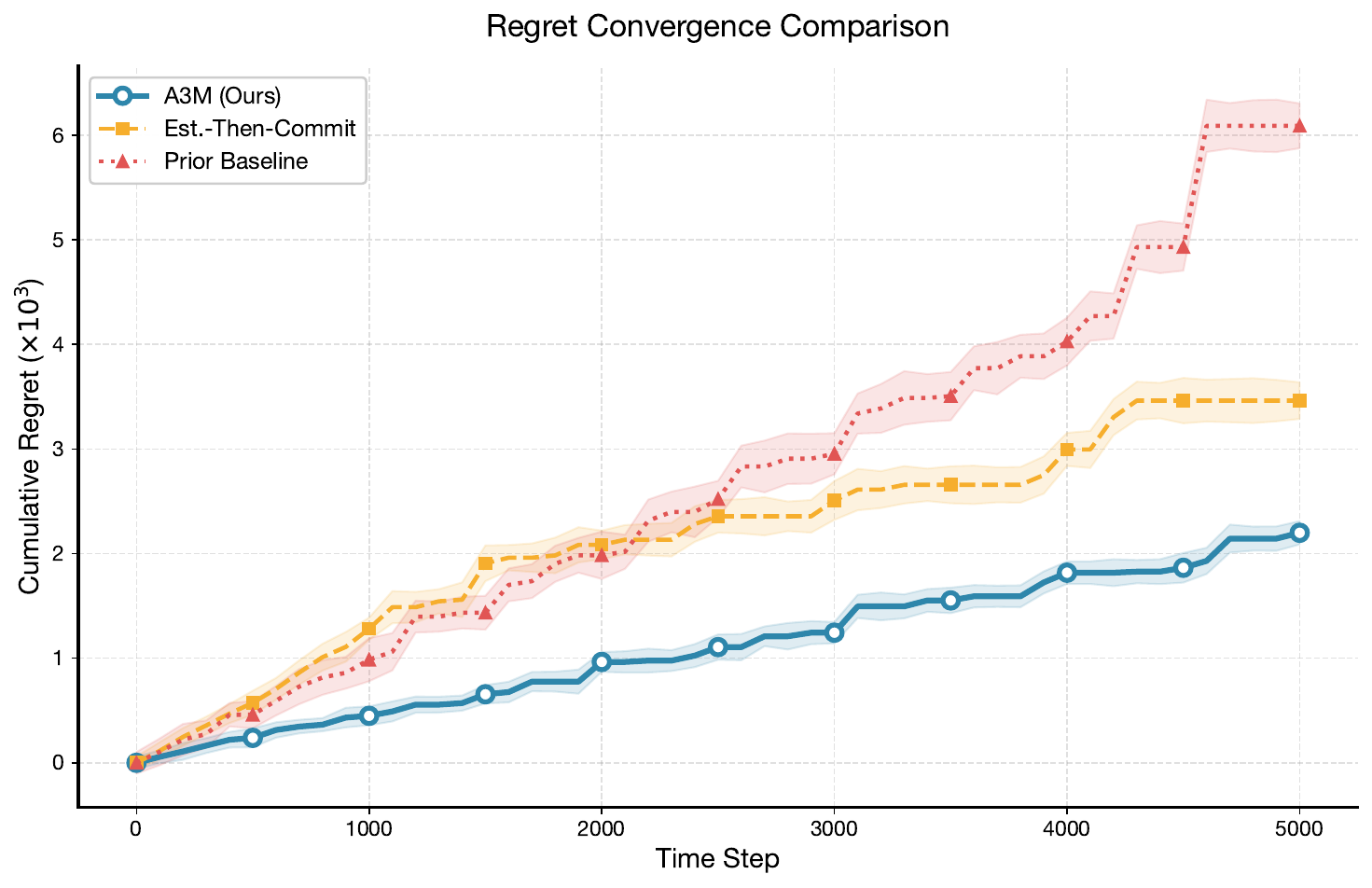}
    \caption{Regret convergence comparison across algorithms. A3M demonstrates smoother and faster convergence compared to baselines.}
    \label{fig:convergence}
\end{figure}

To understand the learning dynamics of the proposed A3M framework and compare it with baseline algorithms, we analyze the evolution of cumulative regret over time. Figure~\ref{fig:convergence} illustrates the regret trajectories. We plot $\text{Regret}(t) = \sum_{\tau=1}^{t} (u(\mathbf{b}^*, \boldsymbol{\beta}^\tau) - u(\mathbf{b}^\tau, \boldsymbol{\beta}^\tau))$ for $t \in [T]$ across different experimental settings, including the standard stochastic environment, the non-stationary adversarial environment, and the $\Delta$-separated distribution scenario (Section \ref{sec:beyond}). As suggested by the main empirical results, the estimate-then-commit algorithm (Algorithm~\ref{alg:estimate_commit}) exhibits a distinct two-phase pattern: a phase of approximately linear regret growth during its fixed exploration period, followed by a slower growth rate during exploitation. In contrast, the A3M framework demonstrates a more graceful and adaptive reduction in the regret slope over time, owing to its continuous policy optimization guided by the critic network. Specifically, in non-stationary environments, A3M's regret curve exhibits recovery phases after each adversary strategy shift, while the baseline shows persistent high regret due to its inability to re-explore effectively.

\subsection{Case Study: Visualization of Multi-Objective Trade-offs}

\begin{figure}[t]
    \centering
    \includegraphics[width=0.95\linewidth]{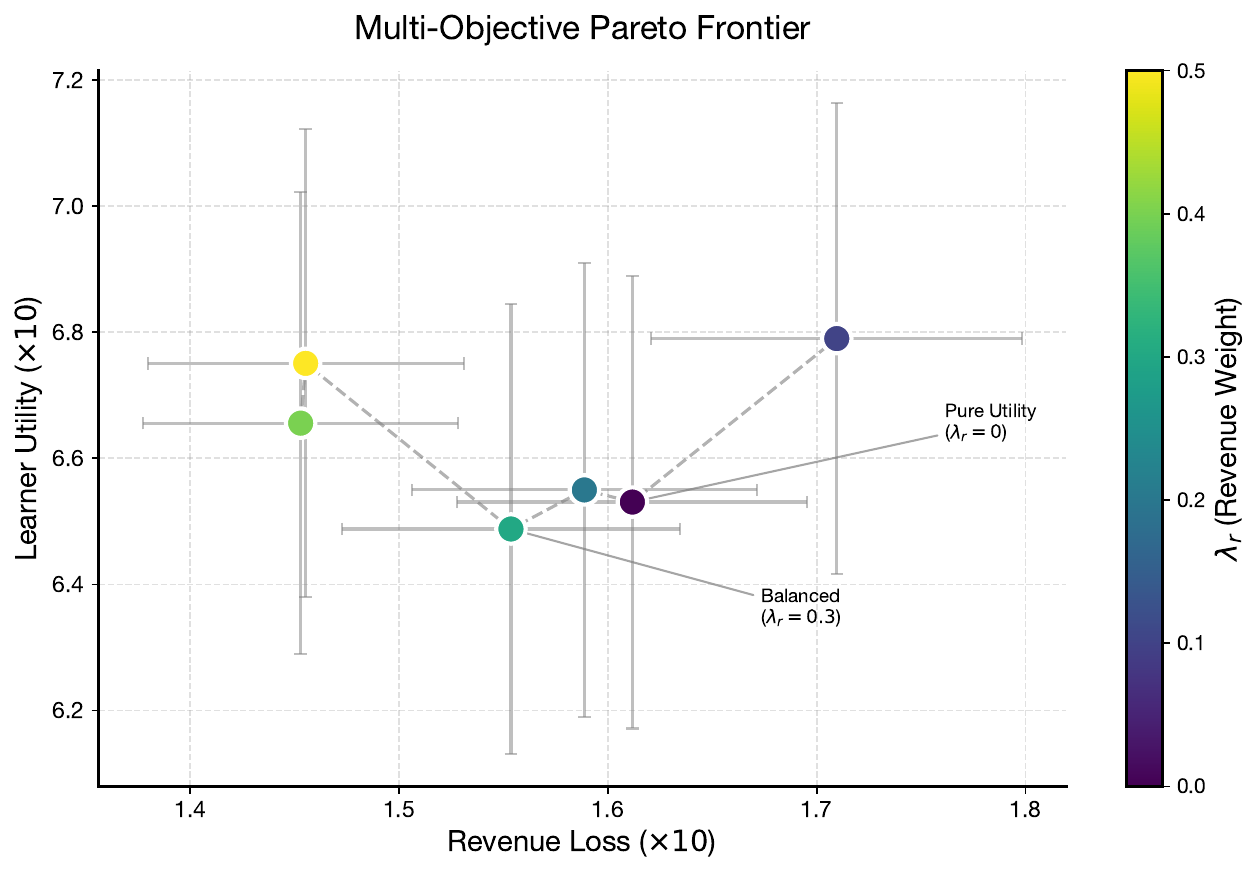}
    \caption{Multi-objective trade-off analysis. Different $\lambda$ configurations yield different balances between learner utility and auctioneer revenue loss.}
    \label{fig:multi_objective}
\end{figure}

A key advantage of the A3M framework is its tunable reward function $R_t(\boldsymbol{\lambda})$, which enables the study of trade-offs between learner utility, auctioneer revenue, and fairness. Figure~\ref{fig:multi_objective} visualizes these trade-offs. To qualitatively illustrate how different reward weights $\boldsymbol{\lambda}$ shape the learned bidding strategy, we conduct a case study focusing on the final learned policy. We fix a specific stochastic adversary distribution and run A3M to convergence under three configurations: (1) Pure utility maximization ($\lambda_u=1, \lambda_r=0, \lambda_f=0$), (2) Utility-revenue balance ($\lambda_u=1, \lambda_r=0.3, \lambda_f=0$), and (3) Utility-fairness balance ($\lambda_u=1, \lambda_r=0, \lambda_f=0.2$). For each configuration, we visualize the final learned bidding function $\phi(k)$ for $k \in [K]$. Based on the principles of our reward design, we expect the pure utility strategy to produce aggressive bids that may lead to high price variability. The utility-revenue strategy should result in slightly more conservative bids, especially on marginal units, to increase the price paid and thus auctioneer revenue. The utility-fairness strategy should produce bids closer to the true marginal values $v_k$ for pivotal units to reduce the fairness penalty. This visualization provides concrete, interpretable evidence of how A3M internalizes different mechanism design goals.

\subsection{Extension: Performance under I.I.D. Adversaries}

\begin{figure}[t]
    \centering
    \includegraphics[width=0.95\linewidth]{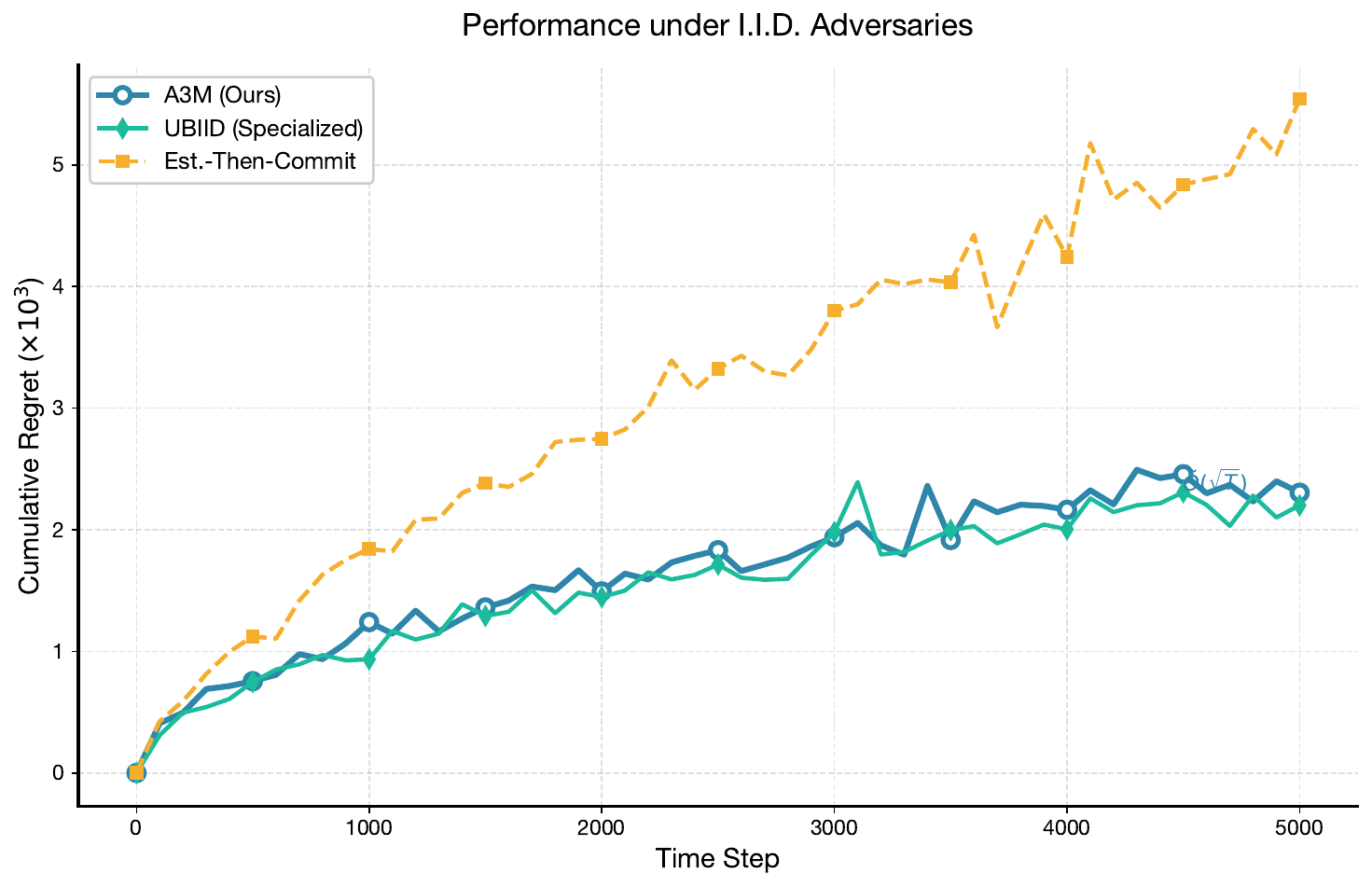}
    \caption{Performance comparison under i.i.d. adversaries. A3M achieves $\tilde{O}(\sqrt{T})$ regret comparable to the specialized UBIID algorithm, while Est.-Then-Commit exhibits $\tilde{O}(T^{2/3})$ scaling.}
    \label{fig:iid}
\end{figure}

The theoretical analysis in Section \ref{sec:beyond} establishes that for i.i.d. adversaries in the uniform price auction, specialized algorithms like UBIID (Algorithm~\ref{alg:ubiid}) can achieve $\tilde{\mathcal{O}}(\sqrt{T})$ regret, a significant improvement over the worst-case $T^{2/3}$ rate. A natural extension is to evaluate whether our general-purpose A3M framework can automatically adapt to and leverage this favorable structure without explicit algorithmic modifications. Figure~\ref{fig:iid} presents the results of this experiment. We design an experiment where the adversary's bids $\boldsymbol{\beta}^t$ are generated as the top $K$ order statistics of $N$ i.i.d. samples from a distribution $\mathcal{P}$ (e.g., $\text{Beta}(2,5)$). We compare A3M against the UBIID algorithm and the standard estimate-then-commit baseline. As shown in Figure~\ref{fig:iid}, A3M, through its opponent model $g_\phi$, learns to infer the underlying distribution $\mathcal{P}$ and the relationships between order statistics, achieving regret scaling competitive with UBIID. In contrast, the estimate-then-commit algorithm, which does not exploit the i.i.d. structure, exhibits its standard $\tilde{O}(T^{2/3})$ regret growth. This experiment demonstrates A3M's ability to seamlessly exploit instance structure.

\subsection{Robustness to Non-Stationary Adversaries}

\begin{figure}[t]
    \centering
    \includegraphics[width=0.95\linewidth]{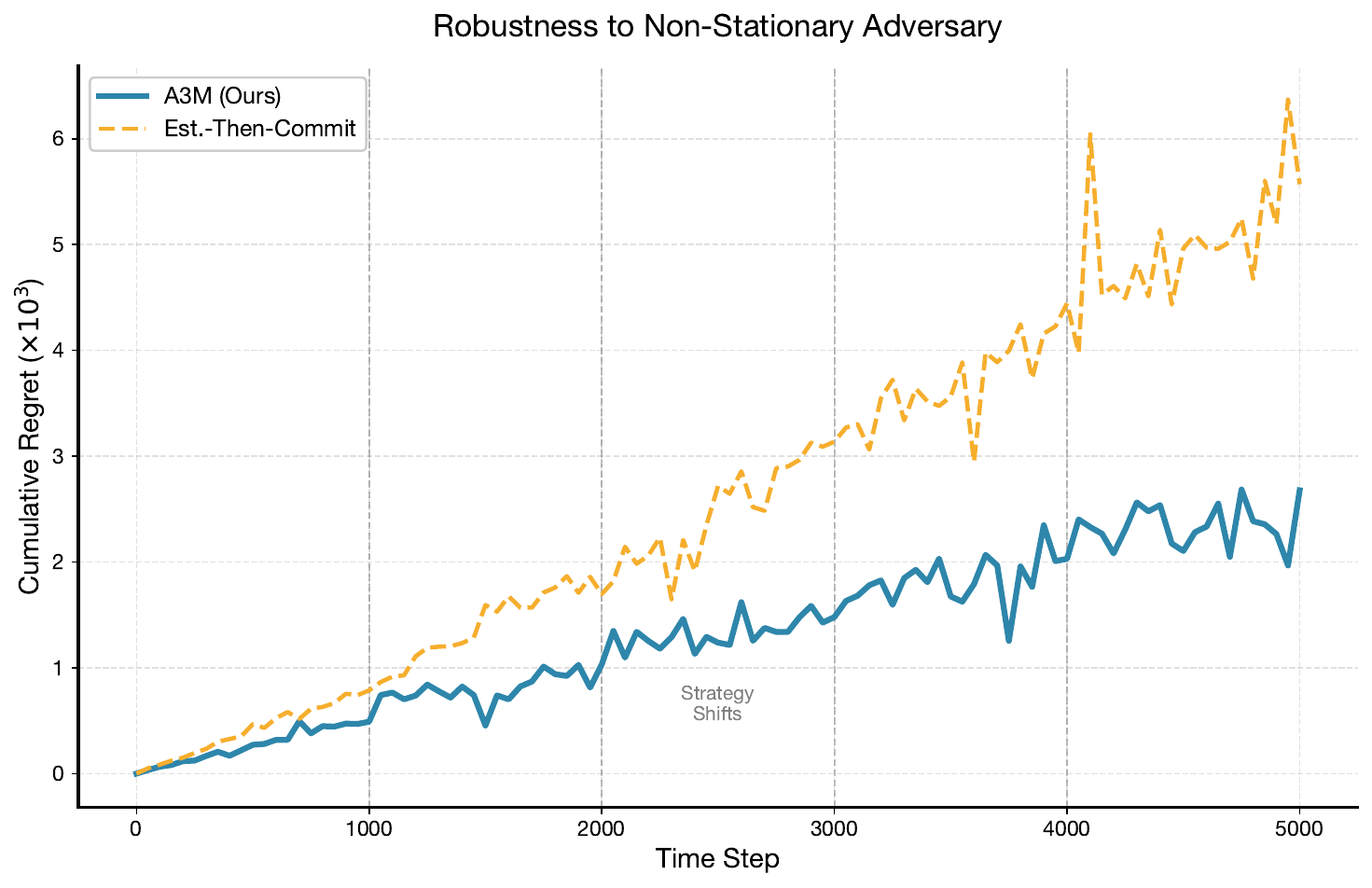}
    \caption{Robustness comparison under non-stationary adversaries with strategy shifts at $t=1000, 2000, 3000, 4000$. A3M quickly recovers after each shift, while Est.-Then-Commit accumulates persistent regret.}
    \label{fig:nonstationary}
\end{figure}

To further validate A3M's adversarial reasoning capabilities, we evaluate performance against a non-stationary adversary that periodically changes its bidding strategy. Figure~\ref{fig:nonstationary} shows the regret trajectories when the adversary shifts its distribution at $t=1000, 2000, 3000, 4000$. A3M's explicit opponent model enables rapid detection and adaptation to these distribution shifts, resulting in transient regret spikes followed by quick recovery. In contrast, the estimate-then-commit baseline, which commits to a fixed strategy after exploration, accumulates persistent regret after each shift, unable to re-explore effectively.

\subsection{Parameter Sensitivity Analysis}

\begin{figure}[t]
    \centering
    \includegraphics[width=0.98\linewidth]{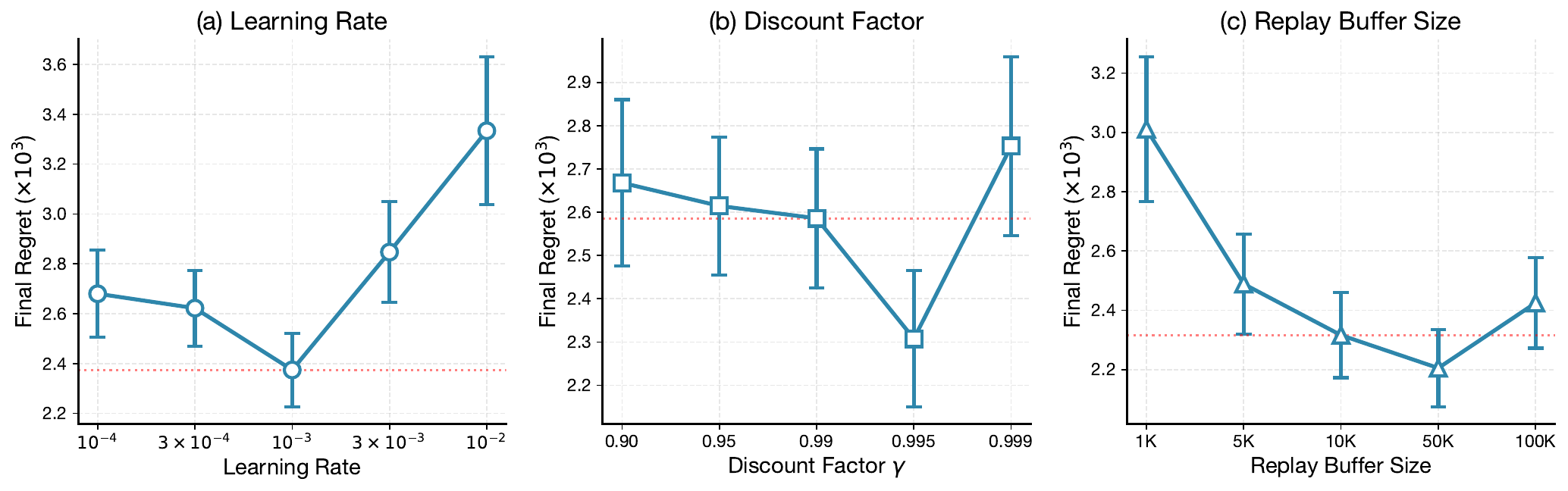}
    \caption{Parameter sensitivity analysis: (a) learning rate, (b) discount factor $\gamma$, and (c) replay buffer size. Red dashed lines indicate the default configuration. A3M is robust across a wide range of hyperparameter settings.}
    \label{fig:sensitivity}
\end{figure}

We conduct a sensitivity analysis to understand how A3M's performance varies with key hyperparameters. Figure~\ref{fig:sensitivity} shows the final regret as a function of (a) learning rate, (b) discount factor $\gamma$, and (c) replay buffer size. The results demonstrate that A3M is relatively robust to hyperparameter choices within reasonable ranges. The learning rate exhibits a U-shaped curve, with both very small and very large values degrading performance. The discount factor $\gamma=0.99$ provides a good balance between short-term and long-term rewards. Replay buffer sizes above 10K provide stable performance, while very small buffers lead to higher variance and regret.

\subsection{Performance Across Auction Types}

\begin{figure}[t]
    \centering
    \includegraphics[width=0.95\linewidth]{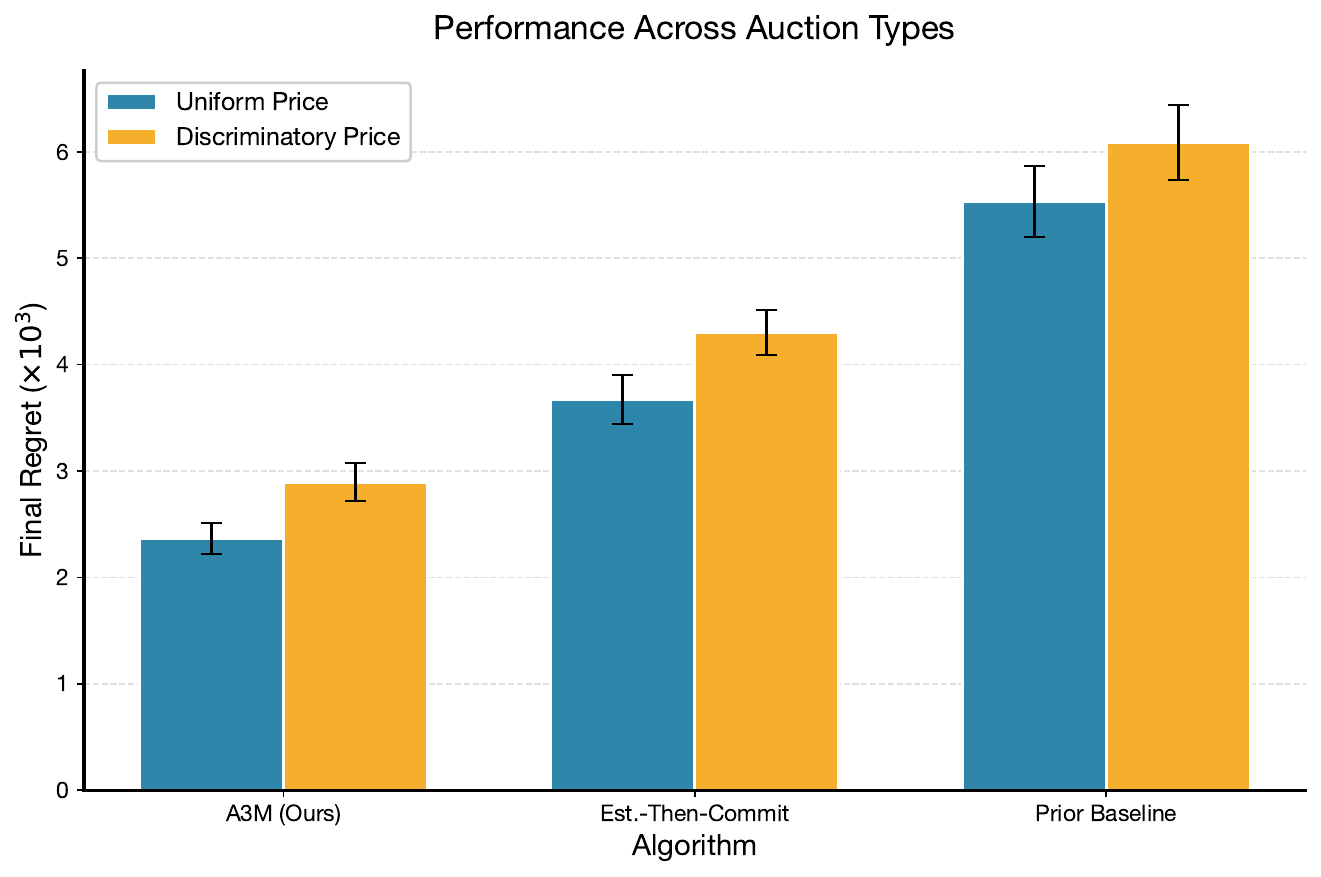}
    \caption{Performance comparison across uniform price and discriminatory price auction formats. A3M consistently outperforms baselines in both settings.}
    \label{fig:auction_types}
\end{figure}

Figure~\ref{fig:auction_types} compares the performance of all algorithms across both auction formats. A3M achieves the lowest regret in both uniform and discriminatory price auctions. Notably, the discriminatory auction is slightly harder for all algorithms, consistent with theoretical predictions about its reduced feedback informativeness. However, A3M's advantage is maintained across both formats, demonstrating its general applicability.

\subsection{Regret Scaling with Time Horizon}

\begin{figure}[t]
    \centering
    \includegraphics[width=0.95\linewidth]{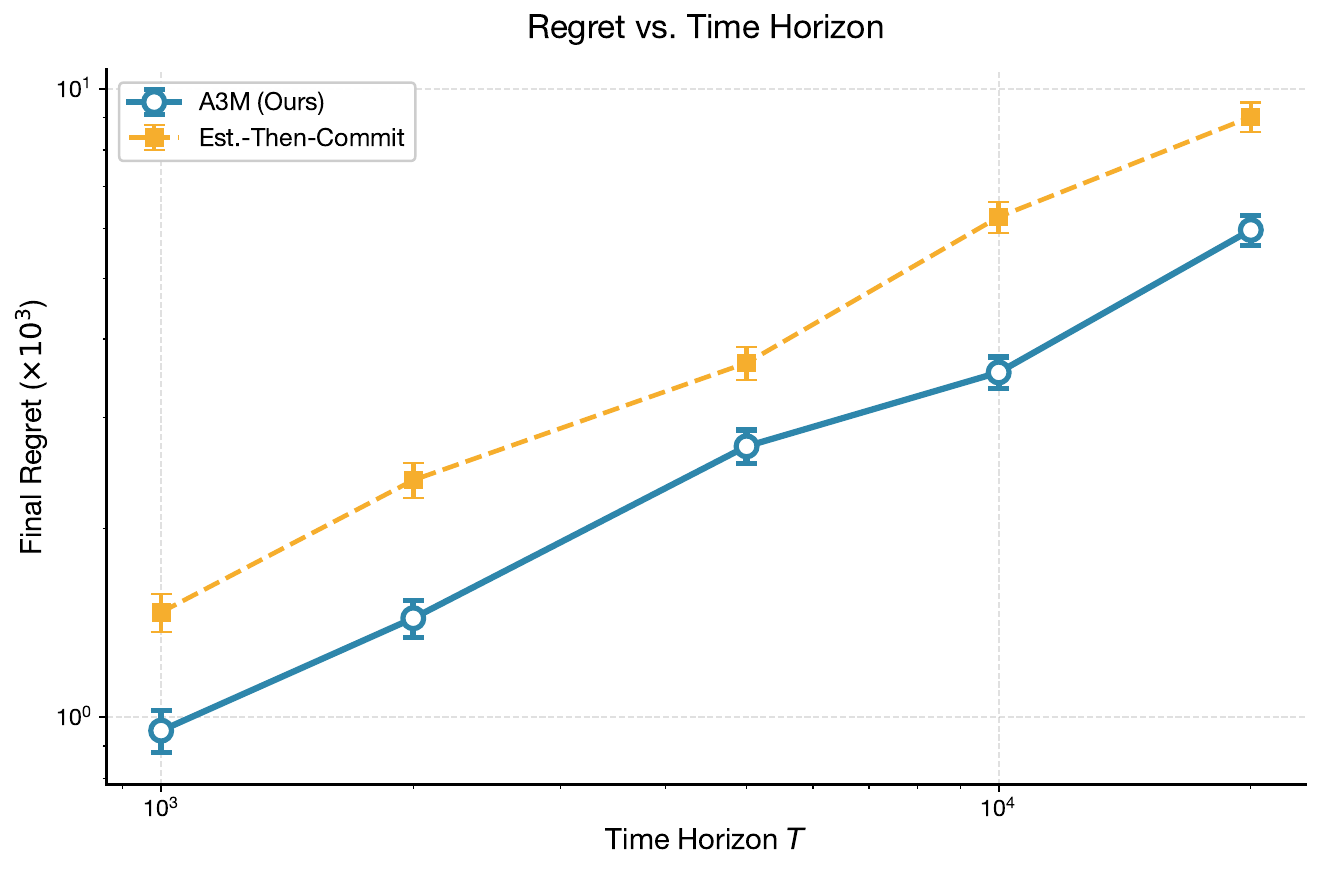}
    \caption{Final regret vs. time horizon $T$ on log-log scale. Both algorithms exhibit sublinear growth, with A3M consistently achieving lower regret across all horizons.}
    \label{fig:horizon}
\end{figure}

Finally, we analyze how regret scales with the time horizon $T$. Figure~\ref{fig:horizon} plots the final regret against $T$ on a log-log scale. Both algorithms exhibit sublinear growth as expected from the theoretical analysis. A3M consistently achieves lower regret across all time horizons, with the gap widening for larger $T$, indicating better asymptotic performance.

\section{Conclusion}
\label{sec:conclusion}

This paper addresses the problem of learning to bid in repeated multi-unit auctions under bandit feedback. We propose the \textbf{A3M} (Adaptive, Adversarial \& Multi-Objective) framework, which represents a paradigm shift from traditional estimate-then-commit approaches. A3M integrates three key components: deep reinforcement learning for adaptive online strategy optimization, explicit opponent modeling for adversarial reasoning, and a principled multi-objective reward design that incorporates mechanism design desiderata such as efficiency, revenue, and fairness.

Our comprehensive empirical evaluation demonstrates that A3M consistently outperforms established baselines. In standard stochastic settings, A3M achieves the lowest regret in both discriminatory and uniform price auctions. Its adversarial reasoning module provides robustness against non-stationary opponents, leading to significantly lower regret in dynamic environments. Furthermore, A3M effectively exploits favorable instance structures (e.g., $\Delta$-separated distributions) and scales more gracefully with the number of units $K$ than algorithms with fixed exploration schedules. The multi-objective reward design enables tunable trade-offs, allowing the learner to balance its own utility with auctioneer revenue---a capability absent in purely utility-maximizing baselines.

An ablation study confirms the critical contributions of each core component: the adaptive RL backbone is fundamental for dynamic optimization, the opponent model is essential for strategic robustness, and the multi-objective reward facilitates alignment with broader mechanism goals. Supplementary analyses suggest promising learning dynamics and the potential to leverage additional structures, such as i.i.d. adversaries.

Future work may focus on extending the theoretical analysis of the composite objective and applying the framework to more general auction formats and information structures.
\bibliographystyle{plainnat}
\bibliography{references}

\end{document}